\documentclass{article} 
\usepackage[preprint]{colm2026_conference}

\usepackage{microtype}
\usepackage{hyperref}
\usepackage{url}
\usepackage{booktabs}
\usepackage{amsmath}
\usepackage{amsthm}
\usepackage[ruled,vlined,linesnumbered]{algorithm2e}
\usepackage{graphicx}
\usepackage{subcaption}
\usepackage{multirow}
\theoremstyle{plain}
\newtheorem{theorem}{Theorem}[section]

\newtheorem{corollary}[theorem]{Corollary}
\theoremstyle{definition}

\newtheorem{assumption}[theorem]{Assumption}
\theoremstyle{remark}

\usepackage{lineno}

\definecolor{darkblue}{rgb}{0, 0, 0.5}
\hypersetup{colorlinks=true, citecolor=darkblue, linkcolor=darkblue, urlcolor=darkblue}

\title{Principled and Scalable Diversity-Aware Retrieval via Cardinality-Constrained Binary Quadratic Programming}

\author{Qiheng Lu \& Nicholas D. Sidiropoulos \\
University of Virginia\\
Charlottesville, VA 22904, USA \\
\texttt{\{luqh, nikos\}@virginia.edu} \\
}

\begin{document}

\ifcolmsubmission
\linenumbers
\fi

\maketitle

\begin{abstract}
Diversity-aware retrieval is essential for Retrieval-Augmented Generation (RAG), yet existing methods lack theoretical guarantees and face scalability issues as the number of retrieved passages $k$ increases. We propose a principled formulation of diversity retrieval as a cardinality-constrained binary quadratic programming (CCBQP), which explicitly balances relevance and semantic diversity through an interpretable trade-off parameter. Inspired by recent advances in combinatorial optimization, we develop a non-convex tight continuous relaxation and a Frank--Wolfe based algorithm with landscape analysis and convergence guarantees. Extensive experiments demonstrate that our method consistently dominates baselines on the relevance-diversity Pareto frontier, while achieving significant speedup.
\end{abstract}

\section{Introduction}

Retrieval-Augmented Generation (RAG) \citep{lewis2020retrieval} has emerged as a widely adopted paradigm for dynamically grounding large language models (LLMs) in external knowledge, reducing hallucinations and improving factual accuracy. However, standard top\nobreakdash-$k$ retrieval \citep{karpukhin2020dense} suffers from a fundamental limitation: retrieved passages tend to be semantically redundant, wasting the limited context window. Redundancy is pervasive in many real-world corpora. For example, news articles about the same event are frequently republished across outlets, resulting in near-duplicate passages that cluster tightly in the embedding space. Consequently, this clustering often leaves relevant passages on other aspects of the query unretrieved. Diversity-aware retrieval addresses this by explicitly penalizing redundancy, ensuring the retrieved passages collectively cover a broader range of relevant information.

Diversity-aware retrieval has been extensively studied in the information retrieval community, with Maximal Marginal Relevance (MMR) \citep{carbonell1998use} and Determinantal Point Processes (DPP) \citep{macchi1975coincidence,kulesza2012determinantal} being the most widely adopted approaches, often natively integrated into popular RAG frameworks (e.g., LlamaIndex, LangChain). However, both methods suffer from fundamental limitations.

MMR maximizes a non-monotone submodular function \citep{lin2011class}, for which the greedy algorithm provides no approximation guarantee in general, unlike the monotone submodular case \citep{nemhauser1978analysis}. Moreover, MMR requires $O (knd)$ time, where $k$ is the number of passages to retrieve, $n$ is the size of the candidate pool, and $d$ is the embedding dimension. This runtime scales linearly with $k$, becoming a practical bottleneck as the number of retrieved passages increases.

DPP offers an alternative probabilistic framework for diversity-aware retrieval, but it suffers from two fundamental limitations. First, from a computational perspective, finding the maximum a posteriori (MAP) configuration --- the most probable subset --- is NP-hard \citep{kulesza2012determinantal}. \citet{gillenwater2012near} propose an algorithm with a 1/4-approximation guarantee for the unconstrained DPP MAP inference problem under the assumption that the log-determinant objective is non-negative, a condition that may not hold in practice. Its quartic time complexity makes it impractical for large-scale retrieval and the theoretical guarantee does not directly extend to the cardinality-constrained setting. \citet{chen2018fast} propose a more efficient greedy algorithm with $O(knd + k^{2} n)$ complexity via Cholesky decomposition, but this comes at the cost of losing any approximation guarantee. The runtime scales super-linearly with $k$, presenting an even more severe scalability challenge than MMR. Second, from a formulation perspective, the relevance-diversity trade-off parameter in DPP is difficult to interpret and tune in practice. This is because the relevance term is naturally bounded, whereas the log-determinant diversity term is unbounded below.

As LLMs increasingly support longer context windows, the demand for retrieving a larger number of diverse passages (i.e., increasing $k$) further exacerbates these efficiency issues. In this paper, we propose a principled and scalable approach for diversity retrieval in RAG by formulating it as a cardinality-constrained binary quadratic programming (CCBQP), where the objective explicitly balances relevance and semantic diversity through a trade-off parameter $\theta$.

While CCBQP is NP-hard \citep{feige2001dense} and difficult to approximate \citep{khot2006ruling,manurangsi2017almost} in general, building on recent progresses \citep{lu2025densest,lu2026densest} in graph combinatorial optimization, we propose a non-convex tight continuous relaxation which can be efficiently tackled via the Frank--Wolfe algorithm \citep{frank1956algorithm,jaggi2013revisiting,braun2022conditional}. We further provide a landscape analysis of the relaxed objective and establish convergence guarantees for the proposed algorithm. Extensive experiments on QAMPARI \citep{amouyal2023qampari} and ASQA \citep{stelmakh2022asqa} with retrieval sizes $k \in \{ 25, 50, 100 \}$ demonstrate that our method consistently dominates MMR and DPP on the relevance-diversity Pareto frontier. In the practically relevant regime of the relevance-diversity trade-off parameter $\theta \geq 0.5$, our method achieves $2.4 \times$ to $22.9 \times$ speedup over MMR (DPP is slower than MMR), with speedup increasing as $\theta$ and $k$ grow --- as our algorithm's runtime scales sub-linearly in $k$, in contrast to the linear scaling of MMR and the super-linear scaling of DPP in $k$. End-to-end experiments further show that diversity-aware retrieval yields consistent albeit modest improvements in generation quality over top-$k$ retrieval.

Our main contributions are as follows:

\noindent $\bullet$ \textbf{Problem Formulation:} We propose a principled formulation of diversity-aware retrieval in RAG as a cardinality-constrained binary quadratic program (CCBQP), which explicitly balances relevance and semantic diversity through an interpretable trade-off parameter.

\noindent $\bullet$ \textbf{Theoretical Analysis:} We propose a non-convex tight continuous relaxation which we efficiently tackle via Frank--Wolfe. We further provide a landscape analysis of the relaxed objective and establish convergence guarantees for the proposed algorithm.

\noindent $\bullet$ \textbf{Empirical Evaluation:} Extensive experiments on QAMPARI and ASQA with different retrieval sizes demonstrate that our method consistently dominates MMR and DPP on the relevance-diversity Pareto frontier while achieving up to $22.9 \times$ speedup over MMR. End-to-end RAG experiments further confirm the practical value of diversity-aware retrieval.

\section{Problem Formulation}

Let $\boldsymbol{E} \in \mathbb{R}^{n \times d}$ denote the embedding matrix of the candidate passage pool, where each row $e_{i} \in \mathbb{R}^{d}$ is a normalized embedding of the $i$-th passage, $n$ is the size of the candidate pool, and $d$ is the embedding dimension. Let $q \in \mathbb{R}^{d}$ be the normalized query embedding, and let $c = Eq \in \mathbb{R}^{n}$ denote the relevance score vector, where $c_{i} = e_{i}^{\mathsf{T}} q$ measures the cosine similarity between the $i$-th passage and the query. The goal of diversity-aware retrieval is to select a subset of $k$ passages that collectively balances relevance to the query and semantic diversity among the selected passages.

Standard top-$k$ retrieval selects passages by solving
\begin{equation}
     \max_{\boldsymbol{x} \in \{0,1\}^n} \boldsymbol{c}^{\mathsf{T}} \boldsymbol{x} \quad \text{s.t.} \quad \boldsymbol{1}^{\mathsf{T}} \boldsymbol{x} = k,
\end{equation}
which maximizes the total relevance of the selected passages but ignores semantic redundancy. To promote diversity, we introduce the term $\boldsymbol{x}^{\mathsf{T}} (\boldsymbol{I} - \boldsymbol{EE}^{\mathsf{T}}) \boldsymbol{x}$, which penalizes the total pairwise cosine similarity among the selected passages, thereby encouraging the retrieved set to be semantically diverse.

To balance relevance and diversity, we propose the following CCBQP formulation:
\begin{equation}
    \begin{aligned}
        \max_{\boldsymbol{x} \in \{0,1\}^{n}}& \theta \cdot (k - 1) \cdot \boldsymbol{c}^{\mathsf{T}} \boldsymbol{x} + (1 - \theta) \boldsymbol{x}^{\mathsf{T}} (\boldsymbol{I} - \boldsymbol{EE}^{\mathsf{T}}) \boldsymbol{x} \\
        \text{s.t.}& \quad \boldsymbol{1}^{\mathsf{T}} \boldsymbol{x} = k, \label{p1}
    \end{aligned}
\end{equation}
where $\theta \in [0, 1]$ is a trade-off parameter. The coefficient $k - 1$ is introduced for the alignment of the scale: both the relevance term $(k - 1) \boldsymbol{c}^{\mathsf{T}} \boldsymbol{x}$ and the diversity term $\boldsymbol{x}^{\mathsf{T}} (\boldsymbol{I} - \boldsymbol{EE}^{\mathsf{T}}) \boldsymbol{x}$ are bounded in $[-(k - 1) k, (k - 1)k]$, ensuring that the trade-off parameter $\theta$ has a consistent interpretation independent of $k$.

Inspired by \citet{lu2025densest, lu2026densest}, we first reformulate \eqref{p1} on the binary feasible set, where $\Vert \boldsymbol{x} \Vert_{2}^{2} = k$ is constant, so adding $(1 - \theta) \lambda \Vert \boldsymbol{x} \Vert_{2}^{2}$ to the objective yields an equivalent problem
\begin{equation}
    \begin{aligned}
        \max_{\boldsymbol{x} \in \{0,1\}^{n}}& f(\boldsymbol{x}) = \theta \cdot (k - 1) \cdot \boldsymbol{c}^{\mathsf{T}} \boldsymbol{x} + (1 - \theta) \boldsymbol{x}^{\mathsf{T}} (\lambda \boldsymbol{I} - \boldsymbol{EE}^{\mathsf{T}}) \boldsymbol{x} \\
        \text{s.t.}& \quad \boldsymbol{1}^{\mathsf{T}} \boldsymbol{x} = k. \label{p2}
    \end{aligned}
\end{equation}

Relaxing the binary constraint in \eqref{p2} to its convex hull then gives
\begin{equation}
    \begin{aligned}
        \max_{\boldsymbol{x} \in [0,1]^{n}}& f(\boldsymbol{x}) = \theta \cdot (k - 1) \cdot \boldsymbol{c}^{\mathsf{T}} \boldsymbol{x} + (1 - \theta) \boldsymbol{x}^{\mathsf{T}} (\lambda \boldsymbol{I} - \boldsymbol{EE}^{\mathsf{T}}) \boldsymbol{x} \\
        \text{s.t.}& \quad \boldsymbol{1}^{\mathsf{T}} \boldsymbol{x} = k. \label{p3}
    \end{aligned}
\end{equation}

We say that the relaxation from  \eqref{p2} to \eqref{p3} is \emph{tight} if their global maximum values coincide, or equivalently, if all global maximizers of \eqref{p2} remain optimal for \eqref{p3}.

\begin{theorem} \label{thm1}
    If $\lambda \geq 2$, the relaxation from \eqref{p2} to \eqref{p3} is tight for any $k$. 
\end{theorem}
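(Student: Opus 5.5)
The plan is to show that, when $\lambda \geq 2$, every feasible point of \eqref{p3} can be pushed to a vertex of the polytope $\{\boldsymbol{x} \in [0,1]^n : \boldsymbol{1}^{\mathsf{T}}\boldsymbol{x} = k\}$ without decreasing $f$. The vertices of this polytope are exactly the binary vectors with $k$ ones, so the maximum of \eqref{p3} would be attained at a feasible point of \eqref{p2}, and the two optimal values would coincide. Since $\{0,1\}^n$ is contained in $[0,1]^n$, the value of \eqref{p3} is at least that of \eqref{p2}. It therefore suffices to prove the reverse inequality.

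The key step is a two-coordinate exchange argument. Take any feasible $\boldsymbol{x}$ that is not binary. Because $\boldsymbol{1}^{\mathsf{T}}\boldsymbol{x} = k$ is an integer, $\boldsymbol{x}$ has at least two fractional coordinates $i \neq j$. Consider the line $\boldsymbol{x}(t) = \boldsymbol{x} + t(\boldsymbol{e}_i - \boldsymbol{e}_j)$, which keeps the sum constraint and stays feasible for $t$ in a nondegenerate interval around $0$. Along this line $g(t) = f(\boldsymbol{x}(t))$ is a univariate quadratic whose $t^2$ coefficient is
\[
(1-\theta)\,(\boldsymbol{e}_i-\boldsymbol{e}_j)^{\mathsf{T}}(\lambda\boldsymbol{I}-\boldsymbol{E}\boldsymbol{E}^{\mathsf{T}})(\boldsymbol{e}_i-\boldsymbol{e}_j) = (1-\theta)\bigl(2\lambda - \Vert \boldsymbol{e}_i - \boldsymbol{e}_j\Vert_2^2\bigr).
\]
Here $\boldsymbol{e}_i, \boldsymbol{e}_j$ in the last norm denote the embedding rows. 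Since the embeddings are unit vectors, $\Vert \boldsymbol{e}_i - \boldsymbol{e}_j\Vert_2^2 = 2 - 2\,\boldsymbol{e}_i^{\mathsf{T}}\boldsymbol{e}_j \leq 4$. So when $\lambda \geq 2$ this coefficient is nonnegative, and $g$ is convex in $t$. A convex function on an interval attains its maximum at an endpoint, so moving to one of the endpoints does not decrease $f$. At that endpoint, $x_i$ or $x_j$ has become $0$ or $1$. The linear term plays no role in the convexity, and $\theta = 1$ gives a linear $g$, which is also handled.

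Iterating this step strictly reduces the number of fractional coordinates. Note that one fractional coordinate alone is impossible given the integer sum, so after at most $n$ steps we reach a binary $\boldsymbol{x}^\ast$ with $f(\boldsymbol{x}^\ast) \geq f(\boldsymbol{x})$. Applying this to a maximizer of \eqref{p3}, which exists by compactness, gives equality of the optimal values. This is independent of $k$, as the statement claims. It follows that any global maximizer of \eqref{p2} attains the maximum of \eqref{p3}.

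The only step requiring care is the curvature computation. One must use the unit-norm embeddings to bound $\Vert \boldsymbol{e}_i-\boldsymbol{e}_j\Vert_2^2$ by $4$, which is exactly where the threshold $\lambda \geq 2$ comes from. Beyond that, one must confirm that the exchange direction keeps both box and sum constraints, which is routine.
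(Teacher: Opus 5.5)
Your proposal is correct and follows essentially the same route as the paper. Both arguments use a pairwise exchange along $\boldsymbol{e}_i-\boldsymbol{e}_j$ between two fractional coordinates, with quadratic coefficient $2(1-\theta)(\lambda-1+w_{ij})\geq 0$ for $\lambda\geq 2$ ($w_{ij}$ the cosine similarity of the unit-norm embeddings), iterated until the point is integral. The only cosmetic difference is how the step is chosen: the paper fixes its sign by ordering the partial derivatives so that the linear term is nonnegative, while you use convexity along the line to take the better endpoint, and you also state explicitly the existence of a maximizer by compactness, which the paper leaves implicit.
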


\begin{proof}
    Please refer to Appendix \ref{appx1}.
\end{proof}

While Theorem \ref{thm1} guarantees the existence of an integral global maximizer for any $\lambda \geq 2$, it does not preclude the existence of non-integral global maximizers when $\lambda = 2$. To rule out non-integral solutions, we introduce a realistic assumption on the passage embeddings.

\begin{assumption} \label{assump1}
    For any pair of distinct passages in the candidate pool, their $\ell^{2}$-normalized embeddings are not perfectly anti-aligned. That is, the cosine similarity satisfies $\boldsymbol{e}_{i}^{\mathsf{T}} \boldsymbol{e}_{j} > -1$.
\end{assumption}

This assumption holds in practice. It is a well-documented phenomenon that representations generated by modern dense embedders are highly anisotropic, typically occupying a narrow cone within the embedding space \citep{ethayarajh2019contextual,li2020sentence}. Under this geometric premise, perfectly opposite vectors virtually never exist in real-world corpora.

Under this assumption, by leveraging the exact same construction used in the proof of Theorem \ref{thm1}, one can prove that for any non-integral point, there exists a strict ascent direction when $\lambda \geq 2$. This precludes any non-integral point  from being a global maximizer, which naturally yields the following corollary:

\begin{corollary} \label{cor1}
    Under Assumption \ref{assump1}, setting $\lambda \geq 2$ guarantees that the sets of optimal solutions for \eqref{p2} and \eqref{p3} are identical, i.e., all global maximizers of the relaxed problem \eqref{p3} are integral.
\end{corollary}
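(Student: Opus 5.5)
The plan is to show that every feasible non-integral point $\boldsymbol{x}$ of \eqref{p3} admits a feasible direction along which $f$ strictly increases. Such a point then cannot be a global maximizer. Since Theorem \ref{thm1} already ensures that the optimal values of \eqref{p2} and \eqref{p3} agree, the two optimal solution sets must then coincide.

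\textbf{Setting up the perturbation.} Fix a feasible $\boldsymbol{x}$ that is not integral. Because $\boldsymbol{1}^{\mathsf{T}}\boldsymbol{x}=k$ is an integer, $\boldsymbol{x}$ has at least two fractional coordinates $i\neq j$ with $x_i,x_j\in(0,1)$. Consider the direction $\boldsymbol{d}=\boldsymbol{e}_i-\boldsymbol{e}_j$, where here $\boldsymbol{e}_i$ denotes a standard basis vector and not an embedding. It preserves $\boldsymbol{1}^{\mathsf{T}}\boldsymbol{x}$, and $\boldsymbol{x}+t\boldsymbol{d}$ is feasible for all $t$ in an open interval around $0$.

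\textbf{Computing the second-order term.} Along this line,
\begin{equation*}
g(t)=f(\boldsymbol{x}+t\boldsymbol{d})=f(\boldsymbol{x})+t\,\nabla f(\boldsymbol{x})^{\mathsf{T}}\boldsymbol{d}+t^2(1-\theta)\,\boldsymbol{d}^{\mathsf{T}}(\lambda\boldsymbol{I}-\boldsymbol{E}\boldsymbol{E}^{\mathsf{T}})\boldsymbol{d}.
\end{equation*}
Using the unit norms $\Vert\boldsymbol{e}_i\Vert=\Vert\boldsymbol{e}_j\Vert=1$ of the embeddings, the quadratic coefficient is
\begin{equation*}
(1-\theta)\bigl(2\lambda-\Vert\boldsymbol{e}_i-\boldsymbol{e}_j\Vert^2\bigr)=(1-\theta)\bigl(2\lambda-2+2\,\boldsymbol{e}_i^{\mathsf{T}}\boldsymbol{e}_j\bigr).
\end{equation*}
For $\lambda\geq 2$ this equals at least $(1-\theta)(2+2\,\boldsymbol{e}_i^{\mathsf{T}}\boldsymbol{e}_j)$. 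By Assumption \ref{assump1} we have $\boldsymbol{e}_i^{\mathsf{T}}\boldsymbol{e}_j>-1$, so the coefficient is strictly positive whenever $\theta<1$. Thus $g$ is strictly convex on the feasible interval, and it cannot attain its maximum at the interior point $t=0$. More concretely, choose the sign of $t$ so that the linear term is nonnegative; for small $|t|$ this gives $g(t)>g(0)$. This is a strict ascent direction, which is the same construction as in Theorem \ref{thm1} with strict inequality.

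\textbf{Main obstacle: the edge case $\theta=1$.} There the quadratic term vanishes and $f=(k-1)\boldsymbol{c}^{\mathsf{T}}\boldsymbol{x}$ is linear. Consider first $k\geq 2$ with distinct relevance scores. The linear term $(k-1)(c_i-c_j)$ is then nonzero, and moving in the sign of $c_i-c_j$ is a strict ascent direction. Ties $c_i=c_j$, and the case $k=1$ where the factor $k-1$ vanishes, would permit non-integral maximizers. I would therefore state the corollary for $\theta\in[0,1)$, which is the regime where the relaxation is non-trivial, and note that for $\theta=1$ the claim holds only under distinct scores with $k\geq2$. Within $\theta<1$, the remaining work is simply to check that the perturbation stays inside $[0,1]^n$, which follows from $x_i,x_j\in(0,1)$.
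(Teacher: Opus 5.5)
Your proposal is correct and follows the paper's route. The paper obtains the corollary by rerunning the swap construction $\boldsymbol{x}+\delta(\boldsymbol{e}_i-\boldsymbol{e}_j)$ on two fractional coordinates from the proof of Theorem \ref{thm1}, where Assumption \ref{assump1} makes the curvature term $2(1-\theta)(\lambda-1+w_{ij})$ strictly positive; your choice of the sign of $t$ is simply the paper's ordering of $i,j$ by partial derivatives. Your $\theta=1$ caveat is a genuine catch that the paper leaves implicit: its argument needs $1-\theta>0$, and the statement fails e.g.\ for $k=1$. One small correction to that caveat: at $\theta=1$ with $2\le k<n$, the exact condition is a strict gap between the $k$-th and $(k+1)$-th largest scores, not full distinctness of $\boldsymbol{c}$.
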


Beyond establishing the tightness of the relaxation, it is crucial to understand the optimization landscape of the relaxed problem \eqref{p3}, as it inherently dictates the convergence behavior of gradient-based algorithms.

\begin{theorem}[Strict Dichotomy of Stationary Points] \label{thm2}
    Under Assumption \ref{assump1}, for any penalty parameter $\lambda \geq 2$ and trade-off parameter $\theta \in [0, 1)$, every stationary point of the relaxed problem \eqref{p3} is either a local maximizer or a strict saddle point with explicit positive curvature. Furthermore, every local maximizer is integral.
\end{theorem}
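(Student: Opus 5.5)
The plan is to show that every non-integral stationary point of \eqref{p3} has a feasible direction of strictly positive curvature. This makes it a strict saddle, never a local maximizer. It also gives the last claim of the statement, since every local maximizer must then be integral.

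\textbf{Setup.} Write $\boldsymbol{Q} = \lambda \boldsymbol{I} - \boldsymbol{EE}^{\mathsf{T}}$. Then $f(\boldsymbol{x}+t\boldsymbol{d}) = f(\boldsymbol{x}) + t\nabla f(\boldsymbol{x})^{\mathsf{T}}\boldsymbol{d} + t^{2}(1-\theta)\boldsymbol{d}^{\mathsf{T}}\boldsymbol{Q}\boldsymbol{d}$ exactly, because $f$ is quadratic. Stationarity means the KKT conditions hold for the polytope $\{\boldsymbol{x}\in[0,1]^n,\ \boldsymbol{1}^{\mathsf{T}}\boldsymbol{x}=k\}$. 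That is, there is a scalar $\mu$ such that $\partial_i f = \mu$ for fractional coordinates, $\partial_i f \le \mu$ when $x_i=0$, and $\partial_i f\ge\mu$ when $x_i=1$.

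\textbf{Fractional points.} Let $\boldsymbol{x}$ be a non-integral stationary point. Since $\boldsymbol{1}^{\mathsf{T}}\boldsymbol{x}=k$ is an integer, there are at least two fractional coordinates $i\neq j$. This is the same construction used for Theorem \ref{thm1}. Take $\boldsymbol{d}=\boldsymbol{e}^{(i)}-\boldsymbol{e}^{(j)}$, the difference of standard basis vectors.
\begin{itemize}
\item The direction is feasible in both signs $\pm\boldsymbol{d}$ for small $t$.
\item Its first-order term is $\partial_i f-\partial_j f=\mu-\mu=0$.
\item Its curvature is $(1-\theta)\boldsymbol{d}^{\mathsf{T}}\boldsymbol{Q}\boldsymbol{d} = (1-\theta)\bigl(2\lambda - \|\boldsymbol{e}_i-\boldsymbol{e}_j\|_2^2\bigr) = (1-\theta)\bigl(2\lambda-2+2\boldsymbol{e}_i^{\mathsf{T}}\boldsymbol{e}_j\bigr)$.
\end{itemize}
For $\lambda\ge 2$ this is at least $(1-\theta)(2+2\boldsymbol{e}_i^{\mathsf{T}}\boldsymbol{e}_j)$. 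This is strictly positive by Assumption \ref{assump1} together with $\theta<1$. So $f$ strictly increases along $\boldsymbol{d}$, and $\boldsymbol{x}$ is a strict saddle with explicit curvature $2(1-\theta)(\lambda-1+\boldsymbol{e}_i^{\mathsf{T}}\boldsymbol{e}_j)>0$. In particular, no local maximizer is fractional.

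\textbf{Integral points.} Let $\boldsymbol{x}$ be an integral stationary point. The main work is showing it is actually a local maximizer, which is what makes the dichotomy exhaustive. The case where the KKT inequalities are all strict is easy: the linear term dominates in every feasible direction. The obstacle is degenerate ties, where $\partial_i f=\partial_j f$ for some $x_i=1$, $x_j=0$.

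To handle ties, use the structure of the feasible cone at a vertex. Every feasible direction $\boldsymbol{d}$ satisfies $d_j\ge0$ off the support $S$, $d_i\le 0$ on $S$, and $\boldsymbol{1}^{\mathsf{T}}\boldsymbol{d}=0$. Decompose it as a nonnegative combination $\sum_{i\in S,j\notin S} w_{ij}(\boldsymbol{e}^{(j)}-\boldsymbol{e}^{(i)})$. The first-order term is $\sum w_{ij}(\partial_j f-\partial_i f)\le 0$.

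\begin{itemize}
\item \emph{Negative first-order term.} Then the point is locally maximal along $\boldsymbol{d}$.
\item \emph{Zero first-order term.} Then $\boldsymbol{d}$ is supported only on tied swaps, and it leads to another point of the feasible set, possibly a neighboring vertex, with the same linear value. The quadratic term decides the outcome. If $\boldsymbol{d}^{\mathsf{T}}\boldsymbol{Q}\boldsymbol{d}>0$ along such a tied direction, the point is again a strict saddle with explicit curvature. This would be a second possibility for the saddle branch; for instance, a pure swap direction has curvature $2(1-\theta)(\lambda-1+\boldsymbol{e}_i^{\mathsf{T}}\boldsymbol{e}_j)>0$.
\end{itemize}

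Working out the claim for integral points is therefore the step I expect to be delicate. It requires classifying integral stationary points as local maximizers exactly when no tied swap direction exists. I would first try to show that any tied swap direction forces positive curvature and hence a strict saddle, which would complete the dichotomy.
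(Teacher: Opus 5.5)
Your approach is the same as the paper's. The fractional case is identical: two fractional coordinates, a swap whose first-order term vanishes by KKT, and curvature $2(1-\theta)(\lambda-1+\boldsymbol{e}_i^{\mathsf{T}}\boldsymbol{e}_j)>0$. The integral case splits, as in the paper, on whether $\max_{x_j=0}\partial_j f<\min_{x_i=1}\partial_i f$ holds strictly or a tie occurs.

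The step you leave open is already settled by your own bullet. The swap curvature depends neither on $\boldsymbol{x}$ nor on the tie. From a vertex, the swap $\boldsymbol{e}^{(j)}-\boldsymbol{e}^{(i)}$ is feasible for $t\in[0,1]$. So a tied pair gives $f(\boldsymbol{x}+t\boldsymbol{d})-f(\boldsymbol{x})=2t^{2}(1-\theta)(\lambda-1+\boldsymbol{e}_i^{\mathsf{T}}\boldsymbol{e}_j)>0$, and the point is a strict saddle (the paper takes $t=1$). Without ties, your swap decomposition yields $\nabla f(\boldsymbol{x})^{\mathsf{T}}\boldsymbol{d}\le-\tfrac{g}{2}\|\boldsymbol{d}\|_1$, where $g>0$ is the gap between the two sides. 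This dominates the quadratic term near $\boldsymbol{x}$, so the point is a local maximizer.

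Two fixes to the write-up:
\begin{enumerate}
\item Drop the opening goal that every integral stationary point is a local maximizer. It is false whenever a tie exists. Those points belong to the saddle branch, and that is exactly what makes the dichotomy exhaustive.
\item Do not try to prove positive curvature along every zero-first-order direction. One ascent direction suffices, and combinations of several swaps can have negative curvature. For example, $\lambda=2$, $\boldsymbol{e}_j=\boldsymbol{e}_{j'}$, $\boldsymbol{e}_i=\boldsymbol{e}_{i'}$ with $\boldsymbol{e}_i^{\mathsf{T}}\boldsymbol{e}_j<0$ gives curvature $8(1-\theta)\boldsymbol{e}_i^{\mathsf{T}}\boldsymbol{e}_j<0$.
\end{enumerate}
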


\begin{proof}
    Please refer to Appendix \ref{appx2}.
\end{proof}

Theorem \ref{thm2} reveals a benign optimization landscape for the relaxed formulation. Because all stationary points that are not local maximizers are strict saddles with explicit positive curvature, standard gradient-based optimization algorithms can typically escape these unstable regions without requiring complex saddle-escaping mechanisms. More importantly, since every local maximizer is integral, any algorithm that converges to a local maximizer is guaranteed to yield a discrete subset, which entirely eliminates the need for rounding.

\begin{theorem}[Monotonicity of Local Maximizers] \label{thm3}
    Let $f_{\lambda}(\boldsymbol{x})$ denote the objective function of the relaxed problem \eqref{p3} parameterized by $\lambda$. Under Assumption \ref{assump1}, suppose that $\lambda_{2} > \lambda_{1} \geq 2$. If an integral point $\boldsymbol{x}^{\ast}$ is a local maximizer of $f_{\lambda_{1}}(\boldsymbol{x})$, then it is also a local maximizer of $f_{\lambda_{2}}(\boldsymbol{x})$.
\end{theorem}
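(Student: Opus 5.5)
The idea is to write $f_{\lambda_2} = f_{\lambda_1} + (1-\theta)(\lambda_2-\lambda_1)\Vert \boldsymbol{x}\Vert_2^2$ on the feasible polytope $\mathcal{P} = \{\boldsymbol{x}\in[0,1]^n : \boldsymbol{1}^{\mathsf{T}}\boldsymbol{x}=k\}$. Then show that the added term $g(\boldsymbol{x}) = \Vert \boldsymbol{x}\Vert_2^2$ is itself locally maximized at every integral point of $\mathcal{P}$. Indeed, $g$ attains its global maximum $k$ over $\mathcal{P}$ exactly at the integral points. For $\boldsymbol{x}\in\mathcal{P}$ we have $x_i^2\le x_i$, so $g(\boldsymbol{x})\le \boldsymbol{1}^{\mathsf{T}}\boldsymbol{x} = k$, with equality iff $\boldsymbol{x}\in\{0,1\}^n$.

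The argument then proceeds as follows. If $\boldsymbol{x}^\ast$ is integral and a local maximizer of $f_{\lambda_1}$, there is a neighborhood $U$ with $f_{\lambda_1}(\boldsymbol{x})\le f_{\lambda_1}(\boldsymbol{x}^\ast)$ for all $\boldsymbol{x}\in U\cap\mathcal{P}$. For the same $\boldsymbol{x}$,
\begin{equation*}
f_{\lambda_2}(\boldsymbol{x}) = f_{\lambda_1}(\boldsymbol{x}) + (1-\theta)(\lambda_2-\lambda_1)\Vert\boldsymbol{x}\Vert_2^2 \le f_{\lambda_1}(\boldsymbol{x}^\ast) + (1-\theta)(\lambda_2-\lambda_1)k = f_{\lambda_2}(\boldsymbol{x}^\ast).
\end{equation*}
This uses $\theta\le 1$ and $\lambda_2>\lambda_1$, so the coefficient is nonnegative, together with $\Vert\boldsymbol{x}\Vert_2^2\le k = \Vert\boldsymbol{x}^\ast\Vert_2^2$. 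Hence $\boldsymbol{x}^\ast$ is a local maximizer of $f_{\lambda_2}$ on the same neighborhood.

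If the paper's notion of local maximizer is strict, the same inequality gives strictness. Strict inequality $f_{\lambda_1}(\boldsymbol{x})<f_{\lambda_1}(\boldsymbol{x}^\ast)$ for $\boldsymbol{x}\neq\boldsymbol{x}^\ast$ carries over directly. Alternatively, when $\theta<1$ one can use that $\Vert\boldsymbol{x}\Vert_2^2<k$ for every non-integral $\boldsymbol{x}$ near $\boldsymbol{x}^\ast$, since integral points of $\mathcal{P}$ are isolated.

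Assumption \ref{assump1} and the condition $\lambda_1\ge2$ are not really needed for this inclusion argument. They only ensure, via Theorem \ref{thm2}, that local maximizers are integral, so the hypothesis ``integral local maximizer'' covers all local maximizers.

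There is no serious obstacle here. The only point to check carefully is the bound $\Vert\boldsymbol{x}\Vert_2^2\le k$ with equality exactly at binary points, which uses both the box constraint and the cardinality constraint. One must also note the degenerate case $\theta=1$, where the added term vanishes and the claim is trivial.
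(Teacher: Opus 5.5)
Your proposal is correct and follows the paper's proof: the same decomposition $f_{\lambda_2} = f_{\lambda_1} + (1-\theta)(\lambda_2-\lambda_1)\Vert\boldsymbol{x}\Vert_2^2$ and the same chain of inequalities on the neighborhood where $\boldsymbol{x}^\ast$ is locally optimal for $f_{\lambda_1}$. Your justification of $\Vert\boldsymbol{x}\Vert_2^2 \le k = \Vert\boldsymbol{x}^\ast\Vert_2^2$ via $x_i^2 \le x_i$ is more explicit than the paper's, which attributes that bound to ``the definition of local optimality'' and invokes Theorem \ref{thm2} for integrality even though integrality is already a hypothesis, and your remark that Assumption \ref{assump1} and $\lambda_1 \geq 2$ are not needed is accurate.
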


\begin{proof}
    Please refer to Appendix \ref{appx3}.
\end{proof}

Theorem \ref{thm3} reveals that as $\lambda$ increases, the set of local maximizers monotonically expands, introducing spurious local maximizers into the optimization landscape. Consequently, to minimize the proliferation of such spurious local maximizers while guaranteeing the tightness of the relaxation, we set $\lambda = 2$ in the remainder of this paper.

\section{Optimization Algorithm}

Having established the favorable properties of the optimization landscape of the relaxed problem \eqref{p3}, we now introduce an efficient optimization algorithm to tackle it. We employ the Frank--Wolfe algorithm \citep{frank1956algorithm,jaggi2013revisiting,braun2022conditional}, which is particularly well-suited for problems with a polyhedral feasible set, such as \eqref{p3}.

\begin{algorithm}[tb]
    \caption{Frank-Wolfe with Exact Line Search for \eqref{p3}}
    \label{alg:fw}
    \SetKwInOut{Input}{Input}
    \SetKwInOut{Output}{Output}
    \DontPrintSemicolon

    \Input{Document embeddings $\boldsymbol{E} \in \mathbb{R}^{n \times d}$, query relevance $\boldsymbol{c} \in \mathbb{R}^n$, budget $k$, trade-off parameter $\theta$, penalty parameter $\lambda=2$, and maximum \# iterations $T$.}
    \Output{Document indicator vector $\boldsymbol{x} \in \mathbb{R}^{n}$.}

    Initialize $\boldsymbol{x}^{(0)} \in \{ \boldsymbol{x} \in [0, 1]^{n} \mid \boldsymbol{1}^{\mathsf{T}} \boldsymbol{x} = k \}$ \;
    Initialize $\boldsymbol{v}^{(0)} \leftarrow \boldsymbol{E}^{\mathsf{T}} \boldsymbol{x}^{(0)}$\;
    
    \For{$t = 0, 1, \dots, T-1$}{
        $\nabla f(\boldsymbol{x}^{(t)}) \leftarrow \theta (k-1)\boldsymbol{c} + 2(1-\theta)(\lambda \boldsymbol{x}^{(t)} - \boldsymbol{E}\boldsymbol{v}^{(t)})$ \tcp*{Gradient Evaluation}
        $\boldsymbol{s}^{(t)} \leftarrow \operatorname{top-k}(\nabla f(\boldsymbol{x}^{(t)}))$ \tcp*{Linear Maximization Oracle}
        $\boldsymbol{d}^{(t)} \leftarrow \boldsymbol{s}^{(t)} - \boldsymbol{x}^{(t)}$\;
        
        \If{$\langle \nabla f(\boldsymbol{x}^{(t)}), \boldsymbol{d}^{(t)} \rangle = 0$}{
            \Return $\boldsymbol{x}^{(t)}$ \tcp*{Exact Convergence}
        }
        $\gamma^{(t)} \leftarrow \arg\max_{\gamma \in [0, 1]} f(\boldsymbol{x}^{(t)} + \gamma \boldsymbol{d}^{(t)})$ \tcp*{Exact Line Search}
        $\boldsymbol{x}^{(t+1)} \leftarrow \boldsymbol{x}^{(t)} + \gamma^{(t)} \boldsymbol{d}^{(t)}$\;
        $\boldsymbol{v}^{(t+1)} \leftarrow \boldsymbol{v}^{(t)} + \gamma^{(t)} \boldsymbol{E}^{\mathsf{T}}\boldsymbol{d}^{(t)}$\;
    }
    \Return $\boldsymbol{x}^{(t+1)}$
\end{algorithm}

Algorithm \ref{alg:fw} delineates the step-by-step execution of our proposed method. While grounded in the standard non-convex Frank--Wolfe framework, our algorithmic implementation is specifically optimized for diversity-aware retrieval in RAG.

\textbf{Exact Line Search.} While widely used adaptive step sizes in non-convex Frank--Wolfe depend on the Lipschitz constant $L$ \citep{lacoste2016convergence,bertsekas2016nonlinear}, this becomes a severe bottleneck for RAG. It is a well-documented phenomenon that dense embedding vectors are highly clustered in a narrow cone \citep{ethayarajh2019contextual,li2020sentence}. Consequently, the maximum eigenvalue of the Gram matrix $\boldsymbol{EE}^{\mathsf{T}}$ scales almost linearly with the candidate pool size $n$. For large $n$, the Lipschitz constant $L$ becomes catastrophically large, making the convergence rate of these adaptive step sizes extremely slow.

We bypass this bottleneck by using an exact line search (Line 9). Since the objective $f(\boldsymbol{x})$ is quadratic, the function value along the update direction $\boldsymbol{d}^{(t)}$ can be formulated as a quadratic function of the step size $\gamma$:
\begin{equation}
    f(\boldsymbol{x}^{(t)} + \gamma \boldsymbol{d}^{(t)}) = f(\boldsymbol{x}^{(t)}) + \gamma \Delta + \frac{1}{2}\gamma^{2} C,
\end{equation}
where $\Delta = \langle \nabla f(\boldsymbol{x}^{(t)}), \boldsymbol{d}^{(t)} \rangle$ is the non-negative Frank--Wolfe gap and $C = \boldsymbol{d}^{(t)\mathsf{T}} \nabla^2 f(\boldsymbol{x}^{(t)}) \boldsymbol{d}^{(t)}$ represents the local curvature. Substituting the Hessian, the curvature expands to $C = 2(1 - \theta) ( \lambda \Vert \boldsymbol{d}^{(t)} \Vert_{2}^{2} - \Vert\boldsymbol{E}^{\mathsf{T}}\boldsymbol{d}^{(t)}\Vert_{2}^{2} )$. 

The optimal step size $\gamma^{(t)}$ maximizing this parabola over $[0, 1]$ has a closed-form solution:
\begin{equation}
    \gamma^{(t)} = \begin{cases} 
    1, & \text{if } C \geq 0, \\ 
    \min\left(1, \frac{\Delta}{-C}\right), & \text{if } C < 0.
    \end{cases}
\end{equation}
This exact line search step size significantly accelerates convergence compared to Lipschitz-based adaptive step sizes.

\textbf{Accelerating General Matrix-Vector Multiplication via Sparsity.} Both the exact line search (computing curvature $C$) and updating $\boldsymbol{v}^{(t+1)}$ (Line 11) depend on the computation of $\boldsymbol{E}^{\mathsf{T}} \boldsymbol{d}^{(t)}$. Naively, this constitutes a general matrix-vector multiplication (GEMV) that requires $O(nd)$ operations. Furthermore, operating on a large, dense transposed matrix $\boldsymbol{E}^{\mathsf{T}}$ usually exhibits poor cache locality, limiting the actual execution speed.

We circumvent this bottleneck by decomposing $\boldsymbol{E}^{\mathsf{T}}\boldsymbol{d}^{(t)}$ as $\boldsymbol{E}^{\mathsf{T}}\boldsymbol{s}^{(t)} - \boldsymbol{E}^{\mathsf{T}}\boldsymbol{x}^{(t)}$. The second term is $\boldsymbol{v}^{(t)}$, which is known and requires no recomputation. For the first term, since $\boldsymbol{s}^{(t)}$ is a $k$-sparse indicator vector, evaluating $\boldsymbol{E}^{\mathsf{T}}\boldsymbol{s}^{(t)}$ reduces to simply gathering and summing $k$ specific rows of $\boldsymbol{E}$. This drops the computation from $O(nd)$ to $O(kd)$ and is highly cache-friendly, as it only fetches the necessary $k$ rows.

\textbf{Complexity.} The execution of Algorithm \ref{alg:fw} is dominated by three main stages. First, the gradient evaluation (Line 4) requires a GEMV $\boldsymbol{E}\boldsymbol{v}^{(t)}$, taking $O(nd)$ time. Second, the linear maximization oracle (Line 5) identifies the top-$k$ entries of the gradient, which takes $O(n)$ time by using radix selection \citep{alabi2012fast} on fixed bit-width floating-point numbers (e.g., \texttt{torch.topk}) or $O(n)$ time in the worst case by using introselect \citep{musser1997introspective} (e.g., \texttt{numpy.argpartition}). Third, as discussed above, the computation of $\boldsymbol{E}^{\mathsf{T}} \boldsymbol{d}^{(t)}$ takes $O(kd)$ time. 

Taking these stages together, the total per-iteration time complexity is $O(nd)$, which is independent of $k$. In practice, because the computation of $\boldsymbol{E}^{\mathsf{T}} \boldsymbol{d}^{(t)}$ takes $O(kd)$, the actual wall-clock execution time per iteration exhibits a highly favorable, sublinear scaling in $k$.

\textbf{Convergence Analysis.} In general, the non-convex Frank--Wolfe algorithm with exact line search has an asymptotic convergence rate of $\mathcal{O}(1/\sqrt{t})$ to a stationary point \citep{lacoste2016convergence}. However, benefiting from the benign optimization landscape of the relaxed problem \eqref{p3} and the exact line search strategy for the step size selection, our algorithm enjoys  \emph{local exact convergence}.

\begin{theorem}[Local Exact Convergence] \label{thm4}
    Let $\boldsymbol{x}^{\ast}$ be an integral local maximizer of the relaxed problem \eqref{p3}. There exists a neighborhood $\mathcal{N}(\boldsymbol{x}^{\ast})$ around $\boldsymbol{x}^{\ast}$ such that, if any non-stationary point $\boldsymbol{x}^{(t)}$ generated by Algorithm \ref{alg:fw} enters $\mathcal{N}(\boldsymbol{x}^{\ast})$, the algorithm exactly converges to $\boldsymbol{x}^{\ast}$ in the next iteration.
\end{theorem}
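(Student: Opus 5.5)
The plan is to show that once $\boldsymbol{x}^{(t)}$ is close enough to $\boldsymbol{x}^{\ast}$, two things hold: the linear maximization oracle returns exactly $\boldsymbol{s}^{(t)} = \boldsymbol{x}^{\ast}$, and the exact line search returns $\gamma^{(t)} = 1$. Then $\boldsymbol{x}^{(t+1)} = \boldsymbol{x}^{(t)} + (\boldsymbol{x}^{\ast} - \boldsymbol{x}^{(t)}) = \boldsymbol{x}^{\ast}$. Let $S = \operatorname{supp}(\boldsymbol{x}^{\ast})$, so $|S| = k$, and write $\boldsymbol{g}^{\ast} = \nabla f(\boldsymbol{x}^{\ast})$.

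\textbf{Step 1: strict gradient gap at $\boldsymbol{x}^{\ast}$.} Since $\boldsymbol{x}^{\ast}$ is a local maximizer, the feasible swap directions $\boldsymbol{e}_{j} - \boldsymbol{e}_{i}$ with $i \in S$, $j \notin S$ (here $\boldsymbol{e}_{i}$ denotes the $i$-th standard basis vector) cannot be first-order ascent directions. Hence $g^{\ast}_{i} \geq g^{\ast}_{j}$ for all such pairs.

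I will upgrade this to a strict inequality. Suppose instead $g^{\ast}_{i} = g^{\ast}_{j}$ for some pair. The curvature along the swap direction is
\[
2(1-\theta)\bigl(2\lambda - \Vert \boldsymbol{E}^{\mathsf{T}}(\boldsymbol{e}_{j} - \boldsymbol{e}_{i}) \Vert_{2}^{2}\bigr) = 2(1-\theta)\bigl(2\lambda - 2 + 2\,\boldsymbol{e}_{i}^{\mathsf{T}}\boldsymbol{e}_{j}\bigr),
\]
where in the last expression $\boldsymbol{e}_{i}, \boldsymbol{e}_{j}$ are the passage embeddings. For $\lambda \geq 2$ this is strictly positive by Assumption \ref{assump1} when $\theta < 1$. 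A zero first-order term together with positive curvature gives strict ascent, which contradicts local maximality; this is the same mechanism as in Theorem \ref{thm2}. Therefore
\[
\delta := \min_{i \in S} g^{\ast}_{i} - \max_{j \notin S} g^{\ast}_{j} > 0 .
\]
When $\theta = 1$ the objective is linear. In that case I will either assume a strict gap, i.e.\ no ties in $\boldsymbol{c}$, or handle it directly: $C = 0$ gives $\gamma = 1$, so only Step 2 is needed.

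\textbf{Step 2: the oracle locks onto $\boldsymbol{x}^{\ast}$.} The gradient $\nabla f$ is affine with Hessian $\boldsymbol{H} = 2(1-\theta)(\lambda \boldsymbol{I} - \boldsymbol{E}\boldsymbol{E}^{\mathsf{T}})$. So $\Vert \nabla f(\boldsymbol{x}) - \boldsymbol{g}^{\ast} \Vert_{\infty} \leq \Vert \boldsymbol{H} \Vert \, \Vert \boldsymbol{x} - \boldsymbol{x}^{\ast} \Vert_{2}$. If $\Vert \boldsymbol{x} - \boldsymbol{x}^{\ast} \Vert_{2} < \delta / (2\Vert \boldsymbol{H} \Vert)$, the top-$k$ entries of $\nabla f(\boldsymbol{x})$ are exactly the indices in $S$, and so $\boldsymbol{s}^{(t)} = \boldsymbol{x}^{\ast}$.

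\textbf{Step 3: the line search takes a full step.} This is the main step. Set $\boldsymbol{d} = \boldsymbol{x}^{\ast} - \boldsymbol{x}$. Then $d_{i} \geq 0$ on $S$, $d_{j} \leq 0$ off $S$, and $\boldsymbol{1}^{\mathsf{T}}\boldsymbol{d} = 0$, so the positive and negative parts of $\boldsymbol{d}$ each have mass $\Vert \boldsymbol{d} \Vert_{1}/2$. This gives
\[
\langle \boldsymbol{g}^{\ast}, \boldsymbol{d} \rangle \geq \delta \, \Vert \boldsymbol{d} \Vert_{1} / 2 .
\]
Using $\nabla f(\boldsymbol{x}) = \boldsymbol{g}^{\ast} - \boldsymbol{H}\boldsymbol{d}$, we get
\[
\Delta = \langle \boldsymbol{g}^{\ast}, \boldsymbol{d} \rangle - \boldsymbol{d}^{\mathsf{T}}\boldsymbol{H}\boldsymbol{d} \geq \tfrac{\delta}{2}\Vert \boldsymbol{d} \Vert_{1} - \Vert \boldsymbol{H} \Vert \, \Vert \boldsymbol{d} \Vert_{2}^{2},
\]
while the curvature satisfies $C = \boldsymbol{d}^{\mathsf{T}}\boldsymbol{H}\boldsymbol{d}$ and $|C| \leq \Vert \boldsymbol{H} \Vert \, \Vert \boldsymbol{d} \Vert_{2}^{2}$.

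If $C \geq 0$, the line search gives $\gamma = 1$ immediately. If $C < 0$, we need $\Delta \geq -C$. Using $\Vert \boldsymbol{d} \Vert_{2}^{2} \leq \Vert \boldsymbol{d} \Vert_{1}$, this holds whenever $2\Vert \boldsymbol{H} \Vert \, \Vert \boldsymbol{d} \Vert_{2}^{2} \leq \tfrac{\delta}{2}\Vert \boldsymbol{d} \Vert_{1}$. That is guaranteed once $\Vert \boldsymbol{d} \Vert_{2} \leq \min\{1, \delta/(4\Vert \boldsymbol{H} \Vert)\}$: then $\Vert \boldsymbol{d} \Vert_{2}^{2} \leq \Vert \boldsymbol{d} \Vert_{2} \leq \Vert \boldsymbol{d} \Vert_{1}$, so $2\Vert \boldsymbol{H} \Vert \, \Vert \boldsymbol{d} \Vert_{2}^{2} \leq \tfrac{\delta}{2}\Vert \boldsymbol{d} \Vert_{2} \leq \tfrac{\delta}{2}\Vert \boldsymbol{d} \Vert_{1}$.

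The key point is that $\Delta$ is linear in $\Vert \boldsymbol{d} \Vert$ while $C$ is quadratic. Non-stationarity of $\boldsymbol{x}^{(t)}$ ensures $\boldsymbol{d} \neq \boldsymbol{0}$, so the algorithm does not stop early at Line 7.

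\textbf{Conclusion.} Define $\mathcal{N}(\boldsymbol{x}^{\ast})$ as the feasible points within $\ell_{2}$-radius $\min\{1, \delta/(4\Vert \boldsymbol{H} \Vert)\}$ of $\boldsymbol{x}^{\ast}$. This radius is at most $\delta/(2\Vert \boldsymbol{H} \Vert)$, so Step 2 applies as well as Step 3. For any non-stationary $\boldsymbol{x}^{(t)} \in \mathcal{N}(\boldsymbol{x}^{\ast})$, Steps 2 and 3 give $\boldsymbol{x}^{(t+1)} = \boldsymbol{x}^{\ast}$.

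The delicate points are the strict gap $\delta > 0$ in Step 1, which relies on Assumption \ref{assump1} and $\theta < 1$, and the ratio bound $\Delta \geq -C$ in Step 3.
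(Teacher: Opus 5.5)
Your proof is correct and follows the same route as the paper. The strict gradient gap at $\boldsymbol{x}^{\ast}$ comes from the swap-direction curvature argument of Theorem~\ref{thm2}, which, as you note, needs Assumption~\ref{assump1} and $\theta<1$; continuity of the affine gradient then makes the top-$k$ oracle return $\boldsymbol{x}^{\ast}$, and the line search takes the full step $\gamma^{(t)}=1$. The only difference is in Step 3, where the paper uses the exact identity $\Delta + C = g'(1) = \langle \nabla f(\boldsymbol{x}^{\ast}), \boldsymbol{d}\rangle \geq \delta\Vert\boldsymbol{d}\Vert_{1}/2 > 0$, so $\Delta \geq -C$ holds for every feasible $\boldsymbol{x}\neq\boldsymbol{x}^{\ast}$; your extra radius $\min\{1,\delta/(4\Vert\boldsymbol{H}\Vert)\}$ is therefore unnecessary, and only the oracle step needs a neighborhood.
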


\begin{proof}
    Please refer to Appendix \ref{appx4}.
\end{proof}

Although Theorem \ref{thm4} only provides a local guarantee, its practical implications are essentially global in practice. As proven in Theorem \ref{thm2}, all stationary points that are not local maximizers are strict saddle points with explicit positive curvature. Consequently, the algorithm almost never stalls around saddle points, practically ensuring global exact convergence to a local maximizer. 

\section{Experiments}

To empirically validate the effectiveness and efficiency of our proposed algorithm, we design comprehensive experiments to answer the following research questions:

\noindent $\bullet$ \textbf{RQ1 (Retrieval Efficiency):} Is our approach more computation-efficient than existing diversity-aware methods for large-scale retrieval? 

\noindent $\bullet$ \textbf{RQ2 (Retrieval Quality):} Does our approach achieve a better Pareto frontier on the relevance-diversity trade-off than existing methods?

\noindent $\bullet$ \textbf{RQ3 (Generation Quality):} Does the improved diversity in the retrieved passages enhance the generation quality?

\subsection{Experimental Setup}

\textbf{Datasets.} We evaluate our proposed method on two knowledge-intensive datasets: \textbf{ASQA} \citep{stelmakh2022asqa}, which focuses on answering ambiguous queries requiring multi-faceted explanations, and \textbf{QAMPARI} \citep{amouyal2023qampari}, which targets list-based questions requiring the retrieval of multiple distinct entities.

\textbf{Candidate Pools.} To construct the candidate pools, we first use BM25 sparse retrieval (via Pyserini \citep{lin2021pyserini}) to fetch the top-3000 passages from a corpus of approximately 21 million distinct 100-word Wikipedia passages for each query in the development set. We then merge and deduplicate these retrieved passages across all queries in the same dataset. This process yields a unified candidate pool of $n = 2,253,350$ passages for ASQA and $n = 2,064,511$ passages for QAMPARI. Next, we compute the 1024-dimensional dense embeddings using the \texttt{BGE-M3} model \citep{chen2024m3}. During evaluation, for each query, algorithms operate directly on this entire pool of size $n$ to select a subset of size $k$.

\textbf{Baselines.} We compare our proposed algorithm against two widely adopted diversity-aware retrieval methods:

\noindent $\bullet$ \textbf{Maximal Marginal Relevance (MMR):} A greedy algorithm that sequentially selects passages by explicitly trading off the relevance to the query against the maximum similarity to the already selected passages.

\noindent $\bullet$ \textbf{Determinantal Point Processes (DPP):} A probabilistic framework that finds the most probable subset. Because the exact DPP MAP inference is NP-hard, we employ the greedy approximation proposed by \citet{chen2018fast}.

\textbf{Implementation Details.} All retrieval experiments are conducted on a workstation equipped with an AMD Ryzen Threadripper PRO 5975WX CPU and two NVIDIA GeForce RTX 4090 GPUs. To ensure performance consistency, all retrieval experiments are executed on the non-display GPU. All retrieval algorithms are implemented in Python with PyTorch and available at \url{https://github.com/luqh357/CCBQP-RAG}, where all baselines are carefully optimized to ensure fair comparison. The generation experiments are performed on a computing cluster using a single NVIDIA A100 (80GB) Tensor Core GPU.

\subsection{Retrieval Experiments}

\textbf{Evaluation Metrics.} 
Unlike traditional information retrieval tasks that prioritize early precision using rank-aware metrics such as mean reciprocal rank (MRR) \citep{voorhees2000trec} or normalized discounted cumulative gain (nDCG) \citep{jarvelin2002cumulated}, we argue that these metrics are mismatched with RAG for two primary reasons. 

First, in typical RAG pipelines, retrieved passages are subsequently processed by a reranker, rendering the initial output ordering irrelevant at the retrieval stage. Second, even without a reranker, monotonically decaying metrics like MRR and nDCG are unsuited for RAG, as LLMs exhibit U-shaped attention over long contexts, which is known as the ``lost in the middle'' phenomenon \citep{liu2024lost}.

Therefore, we utilize \textbf{Recall@$k$} as the metric for relevance, which measures the proportion of required ground-truth evidence successfully covered within the entire retrieved subset. To quantify subset diversity, we employ the \textbf{Intra-List Average Distance (ILAD)} \citep{zhang2008avoiding}, calculated as the mean pairwise cosine distance among the embeddings of the selected $k$ passages. A higher ILAD indicates a more semantically diverse subset.

\begin{figure*}[t]
    \centering
    \begin{subfigure}[b]{0.497\textwidth}
        \includegraphics[width=\textwidth]{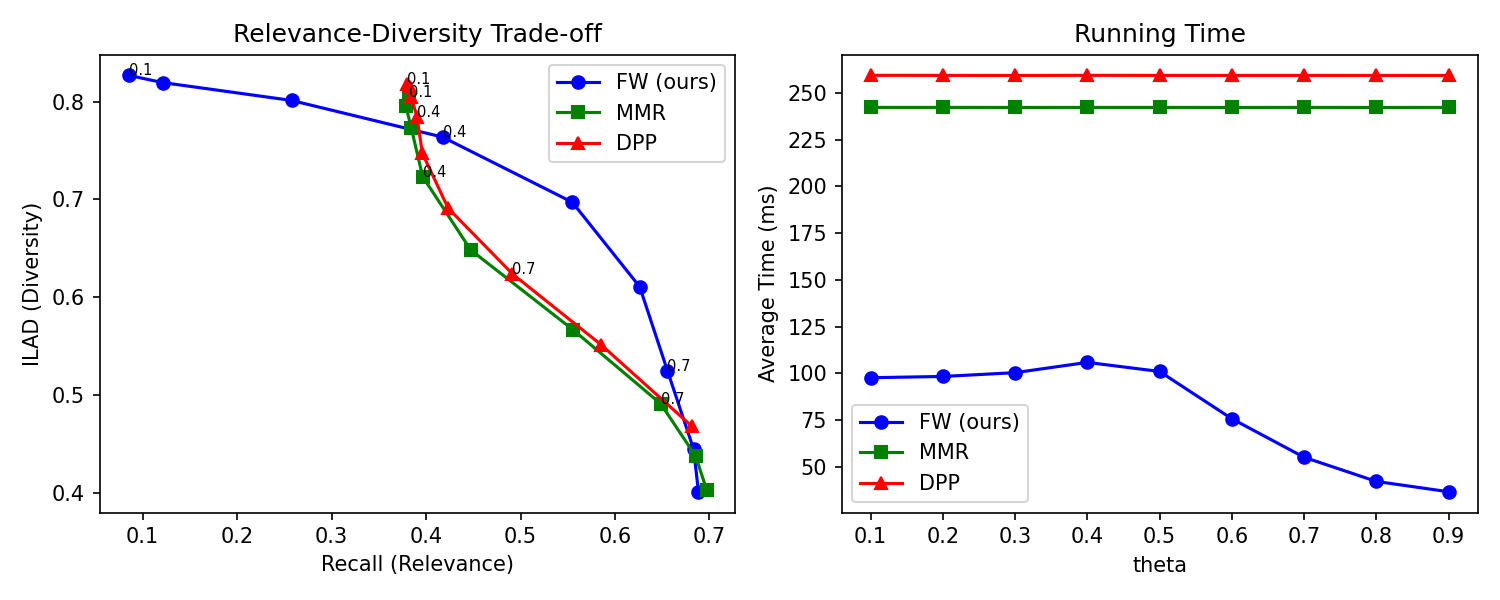}
        \caption{ASQA ($k=25$)}
        \label{fig:asqa_25}
    \end{subfigure}
    \hfill
    \begin{subfigure}[b]{0.497\textwidth}
        \includegraphics[width=\textwidth]{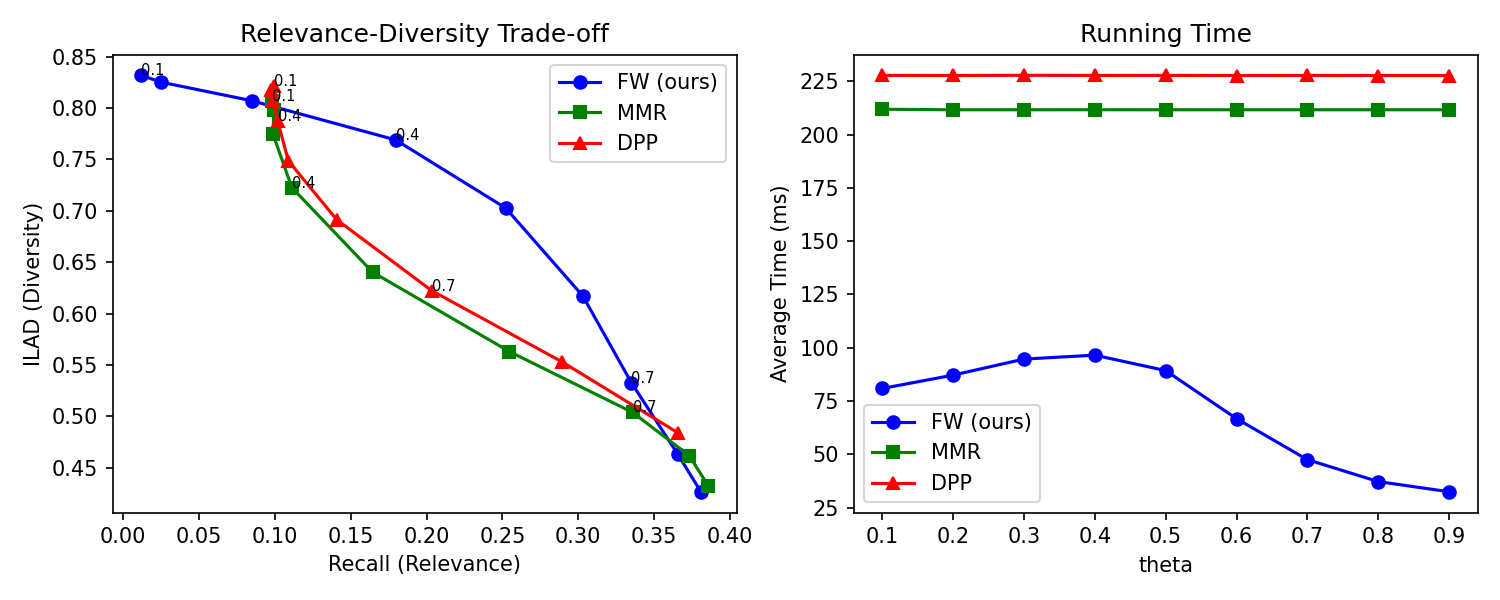}
        \caption{QAMPARI ($k=25$)}
        \label{fig:qampari_25}
    \end{subfigure}
    
    \vspace{0.4cm} 
    
    \begin{subfigure}[b]{0.497\textwidth}
        \includegraphics[width=\textwidth]{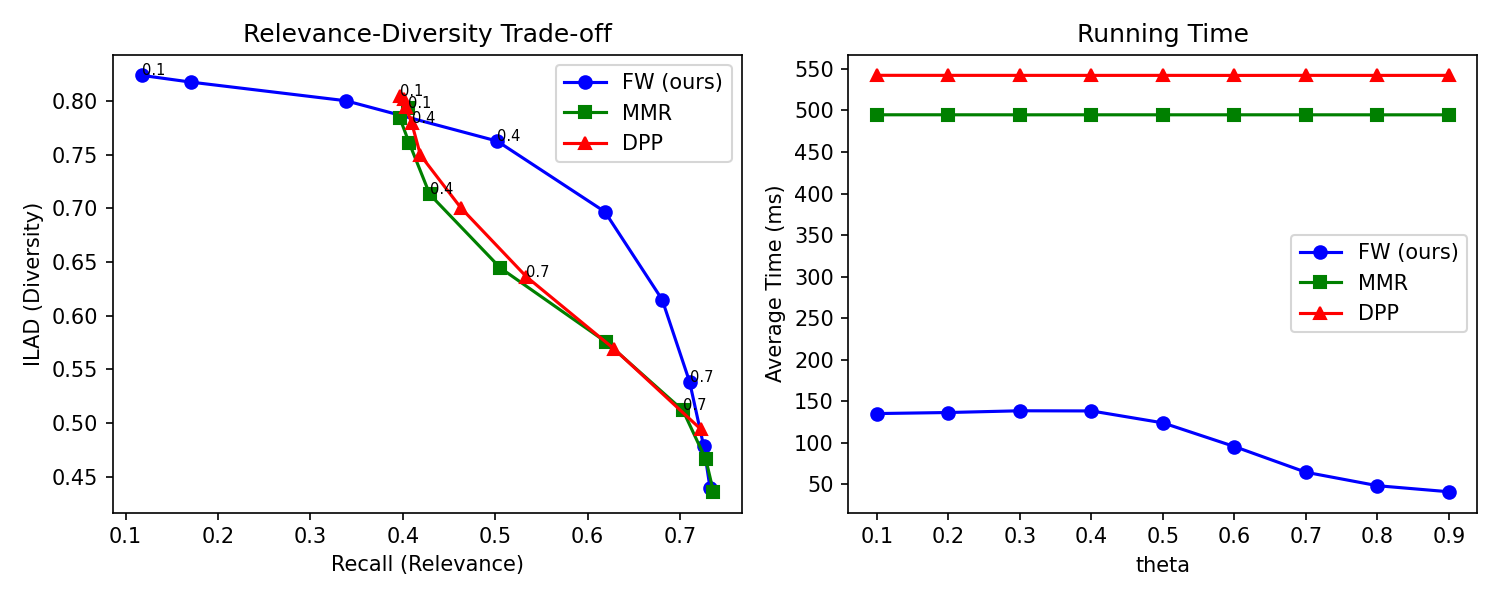}
        \caption{ASQA ($k=50$)}
        \label{fig:asqa_50}
    \end{subfigure}
    \hfill
    \begin{subfigure}[b]{0.497\textwidth}
        \includegraphics[width=\textwidth]{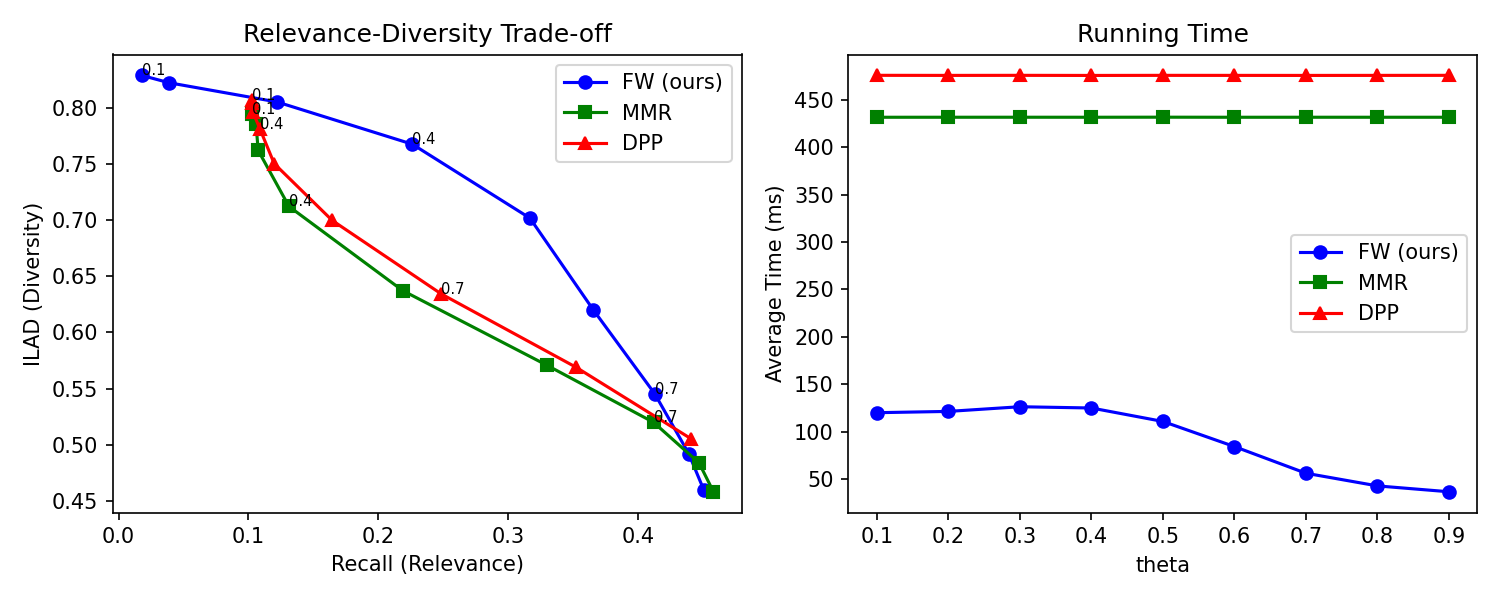}
        \caption{QAMPARI ($k=50$)}
        \label{fig:qampari_50}
    \end{subfigure}
    
    \vspace{0.4cm}
    
    \begin{subfigure}[b]{0.497\textwidth}
        \includegraphics[width=\textwidth]{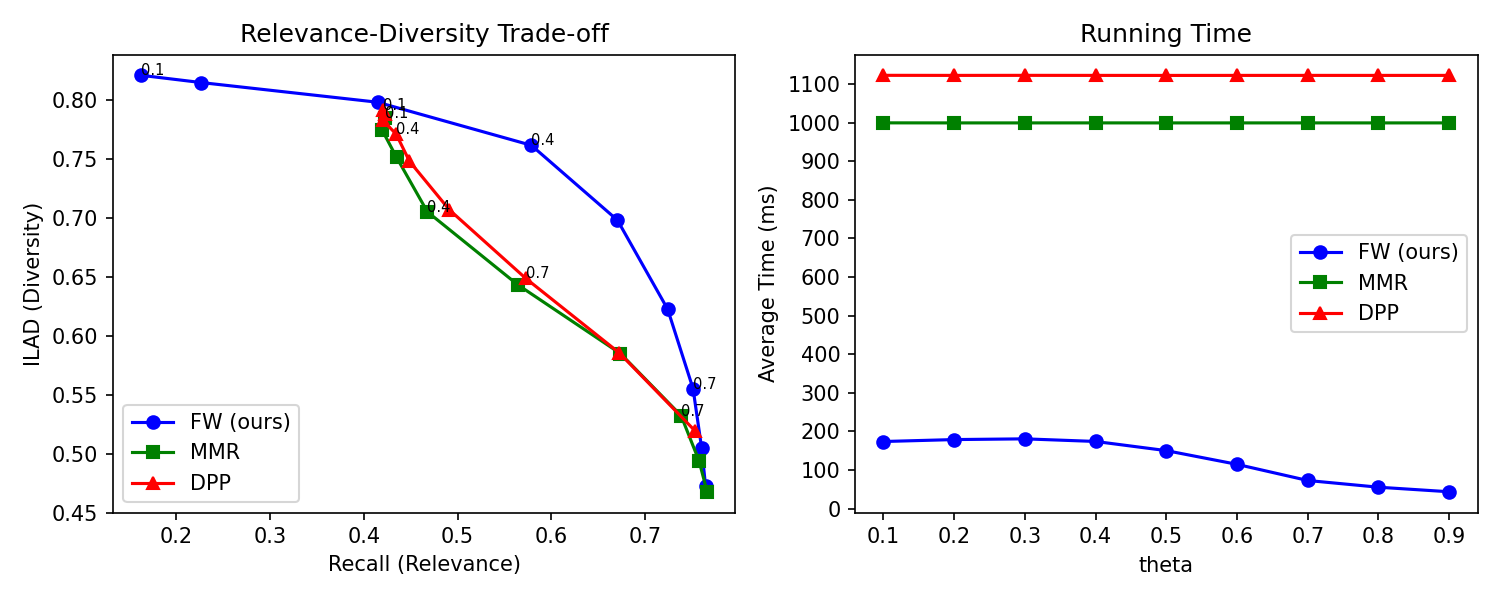}
        \caption{ASQA ($k=100$)}
        \label{fig:asqa_100}
    \end{subfigure}
    \hfill
    \begin{subfigure}[b]{0.497\textwidth}
        \includegraphics[width=\textwidth]{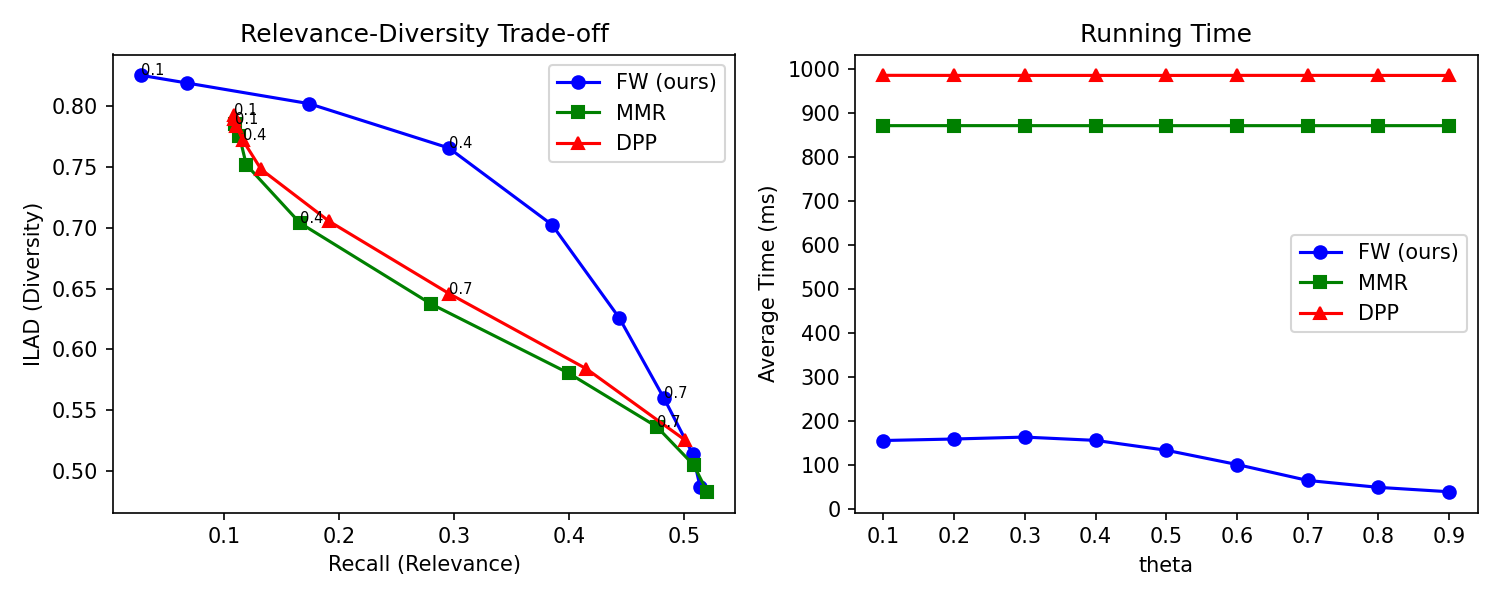}
        \caption{QAMPARI ($k=100$)}
        \label{fig:qampari_100}
    \end{subfigure}
    
    \caption{\textbf{Relevance-diversity trade-off and efficiency.} Relevance-diversity trade-off and computational efficiency on ASQA and QAMPARI across $k \in \{25, 50, 100\}$. The left panels show the Pareto frontier of Recall vs. ILAD by modulating the trade-off parameter $\theta \in [0.1, 0.9]$. The right panels report the per-query wall-clock latency (ms). Our algorithm consistently yields superior subset quality and significant speedup.}
    \label{fig:retrieval_main}
\end{figure*}

\textbf{RQ1 \& RQ2: Retrieval Efficiency \& Quality.}
To comprehensively evaluate the retrieval performance, we modulate the trade-off parameter $\theta \in [0.1, 0.9]$ and plot the relevance-diversity trade-off alongside the per-query wall-clock latency. Figure \ref{fig:retrieval_main} presents the complete evaluation results for subset sizes $k \in \{25, 50, 100\}$ across both the ASQA and QAMPARI datasets.

Observing the left panels across all settings, our Frank--Wolfe algorithm consistently establishes a superior Pareto frontier over both MMR and DPP. Unlike greedy baselines that prematurely sacrifice overall recall to satisfy local diversity constraints, our algorithm globally optimizes the relevance-diversity trade-off simultaneously.

Furthermore, the right panels demonstrate the efficiency on large-scale candidate pools ($n > 2 \times 10^6$). Perfectly aligning with our theoretical analysis, the empirical running times of MMR and DPP scale linearly and slightly super-linearly with $k$, respectively. Consequently, for $k = 100$, they require more than $850$ ms, which is prohibitive for real-time RAG. In contrast, our algorithm exhibits an overall sub-linear scaling, reducing this latency to merely $40 \sim 180$ ms. This advantage makes our approach well-suited for RAG pipelines that demand large context sizes.

It is worth noting that our algorithm yields lower recalls than the baselines for small $\theta$ values. This can be understood by noting that greedy algorithms like MMR select their very first passage based solely on relevance, thereby preventing the minimum recall from dropping too low. Additionally, the DPP data points are observed to tightly cluster together in this region, which may be attributed to the scale mismatch issue discussed earlier. However, such choices of $\theta$ are not practical for RAG, which requires sufficient relevance (typically $\theta \geq 0.5$) to prevent hallucination.

Within the practical operating regime (moderate to large $\theta$), the advantages of our algorithm are noticeable. For moderate $\theta$ values (e.g., $0.5 \sim 0.7$), our approach dominates the relevance-diversity trade-off while achieving considerable speedup. For large $\theta$ values (e.g., $0.8 \sim 0.9$), all algorithms converge toward standard top-$k$ retrieval, causing these data points to cluster together. However, our algorithm is significantly faster in this regime, taking only approximately 40 ms even when $k = 100$.

\subsection{Generation Experiments}

\textbf{RQ3: Generation Quality.} 
To evaluate the downstream generation quality, we employ \texttt{Llama-3.1-8B-Instruct} as our reader LLM. All generation experiments are using vLLM \citep{kwon2023efficient}. We set the generation temperature to 0.

\begin{table*}[t]
\centering
\resizebox{\textwidth}{!}{
\begin{tabular}{@{}ll | ccccc | ccccc@{}}
\toprule
\multirow{2}{*}{\textbf{Method}} & \multirow{2}{*}{$\boldsymbol{\theta}$} & \multicolumn{5}{c|}{$\boldsymbol{k = 25}$} & \multicolumn{5}{c}{$\boldsymbol{k = 50}$} \\
\cmidrule(lr){3-7} \cmidrule(l){8-12}
& & P & R & F1 & R$\geq$0.8 & F1$\geq$0.5 & P & R & F1 & R$\geq$0.8 & F1$\geq$0.5 \\
\midrule
Top-$k$ & --- & 0.2196 & 0.2134 & 0.1824 & 0.0600 & 0.1140 & \textbf{0.2032} & 0.2423 & 0.1847 & 0.0770 & \textbf{0.1200} \\
\midrule
\multirow{4}{*}{MMR}
 & 0.6 & 0.1777 & 0.1481 & 0.1327 & 0.0340 & 0.0570 & 0.1618 & 0.1692 & 0.1356 & 0.0430 & 0.0660 \\
 & 0.7 & 0.1913 & 0.1899 & 0.1579 & 0.0560 & 0.0880 & 0.1759 & 0.2225 & 0.1650 & 0.0670 & 0.0980 \\
 & 0.8 & 0.1985 & 0.2169 & 0.1738 & 0.0710 & 0.1040 & 0.1849 & \textbf{0.2487} & 0.1752 & \textbf{0.0860} & 0.0950 \\
 & 0.9 & 0.2220 & \textbf{0.2224} & 0.1860 & 0.0680 & 0.1230 & 0.1974 & 0.2477 & 0.1826 & \textbf{0.0860} & 0.1100 \\
\midrule
\multirow{4}{*}{DPP}
 & 0.6 & 0.1739 & 0.0749 & 0.0842 & 0.0160 & 0.0360 & 0.1680 & 0.0838 & 0.0886 & 0.0130 & 0.0390 \\
 & 0.7 & 0.1672 & 0.1131 & 0.1101 & 0.0210 & 0.0540 & 0.1486 & 0.1236 & 0.1059 & 0.0250 & 0.0470 \\
 & 0.8 & 0.1796 & 0.1628 & 0.1417 & 0.0400 & 0.0670 & 0.1544 & 0.1846 & 0.1365 & 0.0440 & 0.0630 \\
 & 0.9 & 0.1942 & 0.2190 & 0.1726 & 0.0670 & 0.1020 & 0.1768 & 0.2318 & 0.1652 & 0.0730 & 0.0900 \\
\midrule
\multirow{4}{*}{FW (ours)}
 & 0.6 & 0.2208 & 0.1750 & 0.1637 & 0.0480 & 0.0990 & 0.1990 & 0.2145 & 0.1730 & 0.0620 & 0.0990 \\
 & 0.7 & 0.2246 & 0.1970 & 0.1768 & 0.0620 & 0.1180 & 0.1988 & 0.2304 & 0.1769 & 0.0660 & 0.1050 \\
 & 0.8 & \textbf{0.2330} & 0.2198 & 0.1900 & \textbf{0.0810} & \textbf{0.1290} & 0.1952 & 0.2435 & 0.1822 & 0.0820 & 0.1040 \\
 & 0.9 & 0.2266 & 0.2215 & \textbf{0.1906} & 0.0770 & 0.1240 & 0.2013 & 0.2453 & \textbf{0.1865} & 0.0840 & 0.1100 \\
\bottomrule
\end{tabular}
}
\caption{\textbf{End-to-end generation results on QAMPARI.} The downstream LLM generation quality across $k \in \{25, 50\}$ and $\theta \in \{ 0.6, 0.7, 0.8, 0.9 \}$. P, R, and F1 denote instance-level precision, recall, and F1 score, respectively. R$\geq$0.8 and F1$\geq$0.5 indicate the proportion of queries achieving a recall of at least $0.8$ or an F1 score of at least $0.5$. Notably, our algorithm with $\theta = 0.9$ consistently achieves the highest overall F1 score across both subset sizes.}
\label{tab:generation_qampari}
\end{table*}

In the main text, we specifically focus on the QAMPARI dataset. Because QAMPARI requires generating comprehensive lists of distinct entities, algorithms are measured via instance-level precision, recall, and F1 score. Table \ref{tab:generation_qampari} reveals the following insights:

\noindent $\bullet$ Excessively promoting semantic diversity ($\theta \leq 0.7$ for this dataset) degrades the generation performance. Because of Wikipedia's low redundancy, such strict diversity promotion displaces grounding facts, crippling retrieval recall and the downstream generation. We also find that introducing a modest degree of diversity generally yields modest but consistent improvements over standard top-$k$ retrieval. Notably, our Frank--Wolfe algorithm consistently achieves the highest overall F1 score.

\noindent $\bullet$ When expanding the passage retrieval size from $k = 25$ to $k = 50$, we observe a trade-off: while recall increases, precision drops noticeably. This indicates an information overload for the reader LLM, which introduces noise and causes the overall F1 score to decrease. Consequently, we omit generation experiments for $k = 100$.

Finally, generation results for the ASQA dataset exhibit similar trends and are deferred to Appendix \ref{appx5}.

\section{Discussion}

\textbf{Related Work.} Although CCBQP in general is NP-hard and difficult to approximate, instances with an intrinsic low-rank structure can sometimes be solved exactly in polynomial time \citep{asteris2014sparse,papailiopoulos2014finding}. While our formulation in \eqref{p2} with $\lambda = 0$ exhibits a similar low-rank structure, the presence of the linear term prevents these exact solvers from being trivially adapted. More importantly, even if a theoretical extension were possible, an $O(n^{d+1})$ time complexity is fundamentally prohibitive for RAG applications, as the embedding dimension $d$ typically exceeds $768$.

Beyond classical optimization frameworks, recent studies have explored various strategies to balance relevance and diversity. For instance, \citet{gao2024vector} proposed a parameter-free algorithm; however, without the trade-off parameter, this approach collapses the relevance-diversity Pareto frontier into a single point. \citet{nguyen2025muss} proposed MUSS, a greedy algorithm, to improve scalability via multilevel subset selection. A more recent work by \citet{khan2026df} demonstrates that RAG actually requires dynamically optimizing the trade-off parameter, proving that parameter-free approaches are insufficient.

\textbf{Limitations.} While this paper provides a principled and scalable framework for diversity-aware retrieval, our study has the following limitations.

First, our empirical evaluations are conducted on datasets grounded in the Wikipedia corpus. Wikipedia's low semantic redundancy limits the observable benefits of diversity-aware retrieval. Evaluating our framework on highly redundant corpora remains an important future direction to fully realize the benefits of diversity-aware retrieval.

Second, due to computational constraints, our end-to-end generation experiments were conducted using a single 8B-parameter LLM. While this model scale is widely used for RAG evaluation, larger models often exhibit different behaviors. Evaluating our framework on larger models warrants further exploration.

\section{Conclusion}

In this paper, we introduced a principled and highly scalable framework for diversity-aware retrieval in RAG. By formulating the relevance-diversity trade-off as a cardinality-constrained binary quadratic program (CCBQP), we addressed the fundamental theoretical and computational limitations of existing methods. We proposed a tight continuous relaxation of the CCBQP objective and developed a specialized Frank--Wolfe algorithm equipped with an exact line search. Our theoretical analyses confirmed the benign optimization landscape and the local exact convergence of the proposed algorithm. 

Extensive empirical evaluations demonstrated that our approach consistently dominates the relevance-diversity Pareto frontier. More importantly, by achieving a highly favorable sub-linear scaling with respect to the retrieval size $k$, our method provides significant speedups over MMR and DPP, making it well-suited for modern RAG pipelines that demand massive context windows. End-to-end generation experiments further validated the robustness and utility of our algorithm under various hyperparameter settings. Overall, this work provides a scalable and principled approach to diversity-aware retrieval, offering a robust and efficient foundation for RAG applications.

\bibliography{colm2026_conference}
\bibliographystyle{colm2026_conference}

\appendix
\section{Proof of Theorem \ref{thm1}} \label{appx1}

\begin{proof}
    Let $\boldsymbol{W} = \boldsymbol{EE}^{\mathsf{T}}$. Since each passage embedding $\boldsymbol{e}_i$ is $\ell^{2}$-normalized, the diagonal elements of the similarity matrix are $w_{ii} = \boldsymbol{e}_i^{\mathsf{T}} \boldsymbol{e}_i = 1$ for all $i \in [n]$. Furthermore, the off-diagonal elements are the cosine similarities, meaning $w_{ij} \in [-1, 1]$ for all $i \neq j$.

    Suppose that $\boldsymbol{x}$ is a non-integral feasible point of \eqref{p3}. Let $\mathcal{M}(\boldsymbol{x}) = \{ i \in [n] \mid 0 < x_i < 1 \}$ denote the set of indices corresponding to the non-integral entries of $\boldsymbol{x}$. Because the constraint requires the sum $\boldsymbol{1}^{\mathsf{T}} \boldsymbol{x} = k$ to be an integer, the cardinality of $\mathcal{M}(\boldsymbol{x})$ must be either $0$ or strictly greater than $1$.
    
    For any non-integral feasible $\boldsymbol{x}$, we can always find two distinct indices $i, j \in \mathcal{M}(\boldsymbol{x})$ such that the partial derivatives satisfy $\frac{\partial f(\boldsymbol{x})}{\partial x_i} \geq \frac{\partial f(\boldsymbol{x})}{\partial x_j}$. Let $\boldsymbol{d} = \boldsymbol{e}_i - \boldsymbol{e}_j$, where $\boldsymbol{e}_i$ is the $i$-th vector of the canonical basis for $\mathbb{R}^n$, and let $\delta = \min\{1 - x_i, x_j\}$. Since $i, j \in \mathcal{M}(\boldsymbol{x})$, we have $\delta > 0$. We define a new feasible point $\hat{\boldsymbol{x}} = \boldsymbol{x} + \delta \boldsymbol{d}$.
    
    Next, we evaluate the difference between the objective function values $f(\hat{\boldsymbol{x}})$ and $f(\boldsymbol{x})$. Using the exact Taylor expansion for the quadratic function $f$, we have:
    \begin{equation}
        f(\hat{\boldsymbol{x}}) - f(\boldsymbol{x}) = \delta \nabla f(\boldsymbol{x})^{\mathsf{T}} \boldsymbol{d} + \delta^2 \boldsymbol{d}^{\mathsf{T}} \boldsymbol{Q} \boldsymbol{d}.
    \end{equation}
    where $\boldsymbol{Q} = (1 - \theta)(\lambda \boldsymbol{I} - \boldsymbol{W})$ is the quadratic coefficient matrix of $f$. 
    
    For the first-order term, our choice of $i$ and $j$ ensures that:
    \begin{equation}
        \nabla f(\boldsymbol{x})^{\mathsf{T}} \boldsymbol{d} = \frac{\partial f(\boldsymbol{x})}{\partial x_i} - \frac{\partial f(\boldsymbol{x})}{\partial x_j} \geq 0.
    \end{equation}
    
    For the second-order term, we expand $\boldsymbol{d}^{\mathsf{T}} \boldsymbol{Q} \boldsymbol{d}$:
    \begin{equation}
        \begin{aligned}
            \boldsymbol{d}^{\mathsf{T}} \boldsymbol{Q} \boldsymbol{d} &= (1 - \theta) (\boldsymbol{e}_i - \boldsymbol{e}_j)^{\mathsf{T}} (\lambda \boldsymbol{I} - \boldsymbol{W}) (\boldsymbol{e}_i - \boldsymbol{e}_j) \\
            &= (1 - \theta) [(\lambda - w_{ii}) + (\lambda - w_{jj}) - 2(0 - w_{ij})].
        \end{aligned}
    \end{equation}
    
    Substituting $w_{ii} = w_{jj} = 1$, we obtain:
    \begin{equation}
        \boldsymbol{d}^{\mathsf{T}} \boldsymbol{Q} \boldsymbol{d} = (1 - \theta) [2\lambda - 2 + 2w_{ij}] = 2(1 - \theta)(\lambda - 1 + w_{ij}).
    \end{equation}
    
    Combining these terms, the difference between the objective function values is:
    \begin{equation}
        f(\hat{\boldsymbol{x}}) - f(\boldsymbol{x}) \geq 2\delta^2 (1 - \theta)(\lambda - 1 + w_{ij}).
    \end{equation}
    
    When $\lambda \geq 2$, we have $\lambda - 1 \geq 1$. Since $w_{ij}$ represents the cosine similarity, we know $w_{ij} \geq -1$, which implies $\lambda - 1 + w_{ij} \geq 1 - 1 = 0$. Given that $\theta \in [0, 1]$, it follows that $(1 - \theta) \geq 0$. Therefore:
    \begin{equation}
        f(\hat{\boldsymbol{x}}) - f(\boldsymbol{x}) \geq 0.
    \end{equation}
    
    Hence, the objective function value $f(\hat{\boldsymbol{x}})$ is greater than or equal to $f(\boldsymbol{x})$. By the definition of $\delta$, moving from $\boldsymbol{x}$ to $\hat{\boldsymbol{x}}$ forces either $x_i$ to $1$ or $x_j$ to $0$, meaning the cardinality of $\mathcal{M}(\hat{\boldsymbol{x}})$ is strictly smaller than the cardinality of $\mathcal{M}(\boldsymbol{x})$. By repeating this update until the cardinality of $\mathcal{M}(\boldsymbol{x})$ reduces to $0$, we obtain an integral feasible point $\boldsymbol{x}^{\ast}$ of \eqref{p3} such that $f(\boldsymbol{x}^{\ast}) \geq f(\boldsymbol{x})$. This implies that an integral optimal solution always exists, proving the relaxation is tight.
\end{proof}

\section{Proof of Theorem \ref{thm2}} \label{appx2}

\begin{proof}
    To characterize the stationary points of the relaxed problem \eqref{p3}, we first partition the index set $[n]$ for any feasible point $\boldsymbol{x}$ into three disjoint sets: $\mathcal{S}_{0}(\boldsymbol{x}) = \{i \mid x_{i} = 0\}$, $\mathcal{S}_{f}(\boldsymbol{x}) = \{i \mid 0 < x_{i} < 1\}$, and $\mathcal{S}_{1}(\boldsymbol{x}) = \{i \mid x_{i} = 1\}$. For brevity, we omit the argument $\boldsymbol{x}$ when the context is clear. We also let $\boldsymbol{W} = \boldsymbol{EE}^{\mathsf{T}}$.

    The Lagrangian function of \eqref{p3} can be expressed as:
    \begin{equation}
        L(\boldsymbol{x}, \mu, \boldsymbol{\alpha}, \boldsymbol{\beta}) = f(\boldsymbol{x}) - \mu\left( \sum_{i \in [n]} x_{i} - k \right) + \sum_{i\in[n]}\alpha_i x_i - \sum_{i\in[n]}\beta_i(x_i - 1).
    \end{equation}
    
    The Karush-Kuhn-Tucker (KKT) conditions are:
    \begin{equation}
        \begin{cases}
            \nabla_i f(\boldsymbol{x}) = \mu - \alpha_i + \beta_i, \\
            \alpha_i \geq 0, \quad \beta_i \geq 0, \\
            \alpha_i x_i = 0, \quad \beta_i(x_i - 1) = 0,
        \end{cases}
    \end{equation}
    for every $i \in [n]$, which is equivalent to:
    \begin{equation} \label{eq:kkt_simplified}
        \nabla_{i} f(\boldsymbol{x}) \leq \mu \text{ for } i \in \mathcal{S}_{0}, \quad 
        \nabla_{i} f(\boldsymbol{x}) \geq \mu \text{ for } i \in \mathcal{S}_{1}, \quad 
        \nabla_{i} f(\boldsymbol{x}) = \mu \text{ for } i \in \mathcal{S}_{f}.
    \end{equation}
    
    Next, we evaluate the exact second-order Taylor expansion at a stationary point $\boldsymbol{x}$. For any feasible swap direction $\boldsymbol{d} = \boldsymbol{e}_{i} - \boldsymbol{e}_{j}$ and step size $\delta > 0$, we have:
    \begin{equation} \label{eq:taylor}
        f(\boldsymbol{x} + \delta \boldsymbol{d}) - f(\boldsymbol{x}) = \delta \boldsymbol{d}^{\mathsf{T}} \nabla f(\boldsymbol{x}) + \frac{1}{2} \delta^{2} \boldsymbol{d}^{\mathsf{T}} \nabla^{2} f(\boldsymbol{x}) \boldsymbol{d}.
    \end{equation}
    The first-order term is $\delta (\nabla_{i} f(\boldsymbol{x}) - \nabla_{j} f(\boldsymbol{x}))$. The Hessian is $\nabla^{2} f(\boldsymbol{x}) = 2(1 - \theta)(\lambda \boldsymbol{I} - \boldsymbol{W})$. Since the embeddings are $\ell^{2}$-normalized, the second-order term can be evaluated as:
    \begin{equation} \label{eq:hessian_term}
        \begin{aligned}
            \frac{1}{2} \delta^{2} \boldsymbol{d}^{\mathsf{T}} \nabla^{2} f(\boldsymbol{x}) \boldsymbol{d} &= \delta^{2} (1 - \theta) (\boldsymbol{e}_{i} - \boldsymbol{e}_{j})^{\mathsf{T}} (\lambda \boldsymbol{I} - \boldsymbol{W}) (\boldsymbol{e}_{i} - \boldsymbol{e}_{j}) \\
            &= \delta^{2} (1 - \theta) (2\lambda - w_{ii} - w_{jj} + 2w_{ij}) \\
            &= 2\delta^{2} (1 - \theta) (\lambda - 1 + w_{ij}).
        \end{aligned}
    \end{equation}
    Under Assumption \ref{assump1}, $w_{ij} > -1$. Given $\lambda \geq 2$ and $\theta \in [0, 1)$, we have that the second-order term \eqref{eq:hessian_term} is strictly positive.
    
    Now, we categorize all stationary points into two cases:
    
    \textbf{Case 1: Non-integral stationary points.} \\
    Suppose that $\boldsymbol{x}$ is a non-integral stationary point. Since $\sum_{i \in [n]} x_{i} = k$, the set $\mathcal{S}_f$ must contain at least two distinct indices $i$ and $j$. Let $\boldsymbol{d} = \boldsymbol{e}_{i} - \boldsymbol{e}_{j}$ and $\delta = \min\{x_j, 1 - x_i\} > 0$. By the KKT conditions \eqref{eq:kkt_simplified}, $\nabla_{i} f(\boldsymbol{x}) = \nabla_{j} f(\boldsymbol{x})$, which makes the first-order term in \eqref{eq:taylor} zero. Consequently, $f(\boldsymbol{x} + \delta \boldsymbol{d}) - f(\boldsymbol{x}) = 2\delta^2 (1 - \theta) (\lambda - 1 + w_{ij}) > 0$. Therefore, $\boldsymbol{d}$ is a feasible strict ascent direction, proving that all non-integral stationary points are strict saddle points.
    
    \textbf{Case 2: Integral stationary points.} \\
    Suppose that $\boldsymbol{x}^{\ast}$ is an integral stationary point. By the KKT conditions \eqref{eq:kkt_simplified}, we have $\max_{l \in \mathcal{S}_{0}} \nabla_{l} f(\boldsymbol{x}^{\ast}) \leq \min_{m \in \mathcal{S}_{1}} \nabla_{m} f(\boldsymbol{x}^{\ast})$. 
    For any non-zero feasible direction $\boldsymbol{d}$, since $\boldsymbol{x}^{\ast} \in \{0, 1\}^n$, we must have $d_{l} \geq 0$ for $l \in \mathcal{S}_{0}$ and $d_{m} \leq 0$ for $m \in \mathcal{S}_{1}$. The constraint $\boldsymbol{1}^{\mathsf{T}} \boldsymbol{d} = 0$ ensures that $\sum_{l \in \mathcal{S}_{0}} d_{l} = -\sum_{m \in \mathcal{S}_{1}} d_{m} > 0$. The first-order directional derivative can thus be bounded by:
    \begin{equation} \label{eq:first_order_bound}
        \begin{aligned}
            \boldsymbol{d}^{\mathsf{T}} \nabla f(\boldsymbol{x}^{\ast}) &= \sum_{l \in \mathcal{S}_{0}} d_{l} \nabla_{l} f(\boldsymbol{x}^{\ast}) + \sum_{m \in \mathcal{S}_{1}} d_{m} \nabla_{m} f(\boldsymbol{x}^{\ast}) \\
            &\leq \left(\sum_{l \in \mathcal{S}_{0}} d_{l}\right) \left( \max_{l \in \mathcal{S}_{0}} \nabla_{l} f(\boldsymbol{x}^{\ast}) - \min_{m \in \mathcal{S}_{1}} \nabla_{m} f(\boldsymbol{x}^{\ast}) \right).
        \end{aligned}
    \end{equation}
    If $\max_{l \in \mathcal{S}_{0}} \nabla_{l} f(\boldsymbol{x}^{\ast}) < \min_{m \in \mathcal{S}_{1}} \nabla_{m} f(\boldsymbol{x}^{\ast})$, then $\boldsymbol{d}^{\mathsf{T}} \nabla f(\boldsymbol{x}^{\ast}) < 0$ for all non-zero feasible directions. This satisfies the first-order sufficient condition for local optimality, making $\boldsymbol{x}^{\ast}$ a local maximizer regardless of the second-order curvature.
    
    Therefore, if an integral stationary point $\boldsymbol{x}^{\ast}$ is \emph{not} a local maximizer, we must have $\max_{l \in \mathcal{S}_{0}} \nabla_{l} f(\boldsymbol{x}^{\ast}) = \min_{m \in \mathcal{S}_{1}} \nabla_{m} f(\boldsymbol{x}^{\ast})$. Consequently, there must exist at least one pair of indices $i \in \mathcal{S}_{0}$ and $j \in \mathcal{S}_{1}$ such that $\nabla_{i} f(\boldsymbol{x}^{\ast}) = \nabla_{j} f(\boldsymbol{x}^{\ast})$. 
    
    Let $\boldsymbol{d} = \boldsymbol{e}_{i} - \boldsymbol{e}_{j}$ and $\delta = 1$. The point $\boldsymbol{x}^{\ast} + \boldsymbol{d}$ remains feasible. Because $\nabla_{i} f(\boldsymbol{x}^{\ast}) = \nabla_{j} f(\boldsymbol{x}^{\ast})$, the first-order term in the Taylor expansion \eqref{eq:taylor} vanishes. The exact second-order Taylor expansion yields $f(\boldsymbol{x}^{\ast} + \boldsymbol{d}) - f(\boldsymbol{x}^{\ast}) = 2(1 - \theta) (\lambda - 1 + w_{ij}) > 0$. This explicitly demonstrates strictly positive curvature along the feasible direction $\boldsymbol{d}$, proving that any integral stationary point that is not a local maximizer is a strict saddle point.
    
    \textbf{Conclusion:} Every stationary point is either an integral local maximizer or a strict saddle point with explicit positive curvature. This establishes the strict dichotomy.
\end{proof}

\section{Proof of Theorem \ref{thm3}} \label{appx3}

\begin{proof}
    Observe that the objective function $f_{\lambda}(\boldsymbol{x})$ can be decomposed as:
    \begin{equation}
        f_{\lambda}(\boldsymbol{x}) = \theta (k - 1) \boldsymbol{c}^{\mathsf{T}} \boldsymbol{x} - (1 - \theta) \boldsymbol{x}^{\mathsf{T}} \boldsymbol{Wx} + (1 - \theta) \lambda \Vert \boldsymbol{x} \Vert_{2}^{2}.
    \end{equation}
    Therefore, for any $\lambda_{2} > \lambda_{1} \geq 2$, the difference between the two objective functions is:
    \begin{equation} \label{eq:obj_diff}
        f_{\lambda_2}(\boldsymbol{x}) = f_{\lambda_1}(\boldsymbol{x}) + (1 - \theta)(\lambda_2 - \lambda_1) \Vert \boldsymbol{x} \Vert_{2}^{2}.
    \end{equation}

    Let $\boldsymbol{x}^{\ast}$ be a local maximizer of $f_{\lambda_{1}}(\boldsymbol{x})$. Under Assumption \ref{assump1}, by Theorem \ref{thm2}, $\boldsymbol{x}$ is integral. By the definition of local optimality, there exists a neighborhood $\mathcal{N}$ around $\boldsymbol{x}^{\ast}$ such that for any feasible point $\boldsymbol{x} \in \mathcal{N}$, we have $f_{\lambda_{1}}(\boldsymbol{x}) \leq f_{\lambda_{1}}(\boldsymbol{x}^{\ast})$ and $\Vert \boldsymbol{x} \Vert_{2}^{2} \leq \Vert \boldsymbol{x}^{\ast} \Vert_{2}^{2}$.

    Combining these with \eqref{eq:obj_diff}, for any feasible $\boldsymbol{x} \in \mathcal{N}$, we evaluate the objective under $\lambda_{2}$:
    \begin{equation}
        \begin{aligned}
        f_{\lambda_{2}}(\boldsymbol{x}) &= f_{\lambda_{1}}(\boldsymbol{x}) + (1 - \theta)(\lambda_{2} - \lambda_{1}) \Vert \boldsymbol{x} \Vert_{2}^{2} \\
        &\leq f_{\lambda_1}(\boldsymbol{x}^{\ast}) + (1 - \theta)(\lambda_2 - \lambda_1) \Vert \boldsymbol{x}^{\ast} \Vert_{2}^{2} \\
        &= f_{\lambda_2}(\boldsymbol{x}^{\ast}).
        \end{aligned}
    \end{equation}
    Thus, the local optimality under $\lambda_{1}$ is preserved  under $\lambda_{2}$.
\end{proof}

\section{Proof of Theorem \ref{thm4}} \label{appx4}

\begin{proof}
    Let $\mathcal{S}_{0}(\boldsymbol{x}) = \{i \mid x_{i} = 0\}$ and $\mathcal{S}_{1}(\boldsymbol{x}) = \{i \mid x_{i} = 1\}$. By the local optimality condition established in Appendix \ref{appx2}, we have $\max_{l \in \mathcal{S}_{0}(\boldsymbol{x}^{\ast})} \nabla_{l} f(\boldsymbol{x}^{\ast}) < \min_{m \in \mathcal{S}_{1}(\boldsymbol{x}^{\ast})} \nabla_{m} f(\boldsymbol{x}^{\ast})$. Because the objective function $f(\boldsymbol{x})$ is $L$-smooth, there exists a sufficiently small continuous neighborhood $\mathcal{N}(\boldsymbol{x}^{\ast})$ around $\boldsymbol{x}^{\ast}$ such that for any feasible point $\boldsymbol{x} \in \mathcal{N}(\boldsymbol{x}^{\ast})$, we have $\max_{l \in \mathcal{S}_{0}(\boldsymbol{x}^{\ast})} \nabla_{l} f(\boldsymbol{x}) < \min_{m \in \mathcal{S}_{1}(\boldsymbol{x}^{\ast})} \nabla_{m} f(\boldsymbol{x})$.

    Consequently, when the non-stationary point $\boldsymbol{x}^{(t)}$ is within $\mathcal{N}(\boldsymbol{x}^{\ast})$, the linear maximization oracle in Algorithm \ref{alg:fw} will uniquely identify the local maximizer $\boldsymbol{x}^{\ast}$:
    \begin{equation}
        \boldsymbol{s}^{(t)} = \operatorname{top-k}(\nabla f(\boldsymbol{x}^{(t)})) = \boldsymbol{x}^{\ast}.
    \end{equation}
    The update direction is thus $\boldsymbol{d}^{(t)} = \boldsymbol{x}^{\ast} - \boldsymbol{x}^{(t)}$.

    Next, we prove that the exact line search along $\boldsymbol{d}^{(t)}$ yields $\gamma^{(t)} = 1$. Let $g(\gamma) = f(\boldsymbol{x}^{(t)} + \gamma \boldsymbol{d}^{(t)})$. We evaluate the derivative of $g(\gamma)$ at $\gamma = 1$:
    \begin{equation}
        g'(1) = \langle \nabla f(\boldsymbol{x}^{(t)} + 1 \cdot \boldsymbol{d}^{(t)}), \boldsymbol{d}^{(t)} \rangle = \langle \nabla f(\boldsymbol{x}^{\ast}), \boldsymbol{x}^{\ast} - \boldsymbol{x}^{(t)} \rangle.
    \end{equation}
    Expanding this inner product over the sets $\mathcal{S}_{1}(\boldsymbol{x}^{\ast})$ and $\mathcal{S}_{0}(\boldsymbol{x}^{\ast})$, we obtain:
    \begin{equation}
        g'(1) = \sum_{m \in \mathcal{S}_{1}(\boldsymbol{x}^{\ast})} \nabla_{m} f(\boldsymbol{x}^{\ast}) (1 - x^{(t)}_{m}) + \sum_{l \in \mathcal{S}_{0}(\boldsymbol{x}^{\ast})} \nabla_{l} f(\boldsymbol{x}^{\ast}) (0 - x^{(t)}_{l}).
    \end{equation}
    Let $\sum_{m \in \mathcal{S}_{1}(\boldsymbol{x}^{\ast})} (1 - x^{(t)}_{m}) = \sum_{l \in \mathcal{S}_{0}(\boldsymbol{x}^{\ast})} x^{(t)}_{l} = \Delta_{\boldsymbol{x}} > 0$. We have the lower bound of $g'(1)$:
    \begin{equation}
        \begin{aligned}
            g'(1) &\geq \left( \min_{m \in \mathcal{S}_{1}(\boldsymbol{x}^{\ast})} \nabla_{m} f(\boldsymbol{x}^{\ast}) \right) \sum_{m \in \mathcal{S}_{1}(\boldsymbol{x}^{\ast})} (1 - x^{(t)}_{m}) - \left( \max_{l \in \mathcal{S}_{0}(\boldsymbol{x}^{\ast})} \nabla_{l} f(\boldsymbol{x}^{\ast}) \right) \sum_{l \in \mathcal{S}_{0}(\boldsymbol{x}^{\ast})} x^{(t)}_{l} \\
        &= \left( \min_{m \in \mathcal{S}_{1}(\boldsymbol{x}^{\ast})} \nabla_{m} f(\boldsymbol{x}^{\ast}) - \max_{l \in \mathcal{S}_{0}(\boldsymbol{x}^{\ast})} \nabla_{l} f(\boldsymbol{x}^{\ast}) \right) \Delta_{\boldsymbol{x}} > 0.
        \end{aligned}
    \end{equation}
 
    Let $C = \boldsymbol{d}^{(t)\mathsf{T}} \nabla^{2} f(\boldsymbol{x}^{(t)}) \boldsymbol{d}^{(t)}$ be the curvature. There are two cases for the exact line search:
    \begin{enumerate}
        \item If $C \geq 0$, $g(\gamma)$ is strictly monotonically increasing on $[0, 1]$. Thus, $g(\gamma)$ is maximized within the feasible interval $[0, 1]$ if and only if $\gamma = 1$.
        \item If $C < 0$, the parabola $g(\gamma)$ is strictly concave and the axis of symmetry for the parabola is $-\frac{\langle \nabla f(\boldsymbol{x}^{(t)}), \boldsymbol{d}^{(t)} \rangle}{C} > 0$. Since $g'(1) > 0$, we have $-\frac{\langle \nabla f(\boldsymbol{x}^{(t)}), \boldsymbol{d}^{(t)} \rangle}{C} > 1$. Thus, $g(\gamma)$ is maximized within the feasible interval $[0, 1]$ if and only if $\gamma = 1$.
    \end{enumerate}

    In both cases, the exact line search returns $\gamma^{(t)} = 1$. Hence, in the next iteration, Algorithm~\ref{alg:fw} will exactly converge to:
    \begin{equation}
        \boldsymbol{x}^{(t+1)} = \boldsymbol{x}^{(t)} + 1 \cdot (\boldsymbol{x}^{\ast} - \boldsymbol{x}^{(t)}) = \boldsymbol{x}^{\ast}.
    \end{equation}
\end{proof}

\section{Generation results for ASQA} \label{appx5}

\begin{table}[t]
\centering
\resizebox{0.6\columnwidth}{!}{
\begin{tabular}{@{}ll | cc | cc@{}}
\toprule
\multirow{2}{*}{\textbf{Method}} & \multirow{2}{*}{$\boldsymbol{\theta}$} & \multicolumn{2}{c|}{$\boldsymbol{k = 25}$} & \multicolumn{2}{c}{$\boldsymbol{k = 50}$} \\
\cmidrule(lr){3-4} \cmidrule(l){5-6}
& & Recall & Recall=1.0 & Recall & Recall=1.0 \\
\midrule
Top-$k$ & --- & 0.4819 & 0.2099 & 0.4746 & 0.2120 \\
\midrule
\multirow{4}{*}{MMR}
 & 0.6 & 0.4136 & 0.1646 & 0.4239 & 0.1614 \\
 & 0.7 & 0.4507 & 0.1867 & 0.4591 & 0.1888 \\
 & 0.8 & 0.4795 & 0.2173 & \textbf{0.4789} & \textbf{0.2131} \\
 & 0.9 & \textbf{0.4947} & \textbf{0.2226} & 0.4722 & 0.2120 \\
\midrule
\multirow{4}{*}{DPP}
 & 0.6 & 0.3404 & 0.1129 & 0.3357 & 0.1086 \\
 & 0.7 & 0.3826 & 0.1414 & 0.3692 & 0.1319 \\
 & 0.8 & 0.4275 & 0.1719 & 0.4308 & 0.1656 \\
 & 0.9 & 0.4707 & 0.2152 & 0.4677 & 0.2015 \\
\midrule
\multirow{4}{*}{FW (ours)}
 & 0.6 & 0.4655 & 0.2057 & 0.4561 & 0.2025 \\
 & 0.7 & 0.4661 & 0.1994 & 0.4665 & 0.1994 \\
 & 0.8 & 0.4830 & 0.2141 & 0.4756 & 0.2057 \\
 & 0.9 & 0.4757 & 0.2078 & 0.4721 & 0.2089 \\
\bottomrule
\end{tabular}
}
\caption{\textbf{End-to-end generation results on ASQA.} The downstream LLM generation quality across $k \in \{25, 50\}$ and $\theta \in \{ 0.6, 0.7, 0.8, 0.9 \}$. Recall=1.0 indicates the proportion of queries achieving a recall of 1.0.}
\label{tab:generation_asqa}
\end{table}

Unlike QAMPARI, which requires generating a structured list of discrete entities, ASQA requires generating a paragraph. Consequently, algorithms are measured via instance-level recall, as calculating generation precision or F1 scores is difficult for this dataset.

The empirical results in Table \ref{tab:generation_asqa} reveal two key observations:

\noindent $\bullet$ Introducing semantic diversity allows both MMR and our Frank--Wolfe algorithm to yield recall scores that surpass the standard top-$k$ baseline. Specifically, Frank--Wolfe with $\theta=0.8$ consistently maintains a slight recall advantage over top-$k$.

\noindent $\bullet$ While MMR achieves a higher peak recall and recall=1.0 compared to our Frank--Wolfe algorithm, our algorithm offers two advantages. First, Frank--Wolfe is significantly faster than MMR. Second, Frank--Wolfe exhibits superior hyperparameter robustness, as MMR and DPP suffer severe performance degradation at moderate $\theta$ values. This robustness can be attributed to our algorithm's substantial advantage on the retrieval Pareto frontier at moderate $\theta$ values. 

\end{document}